%% file: main.tex
\documentclass{article}

\usepackage[T1]{fontenc}
\usepackage{lmodern}
\usepackage{type1cm}
\usepackage{microtype}
\usepackage{graphicx}
\usepackage{subcaption}
\usepackage{booktabs}
\usepackage{hyperref}

\usepackage[accepted]{icml2026}

\usepackage{algorithm}
\usepackage{algorithmic}
\usepackage{bm}
\usepackage{amsmath}
\usepackage{amssymb}
\usepackage{amsthm}
\usepackage{graphicx}
\usepackage{mathtools}
\usepackage{dsfont}

\newcommand{\ind}{\mathds{1}}
\newcommand{\defn}{\coloneqq}
\newcommand{\defnrev}{\eqqcolon}
\newcommand{\var}{\mathsf{Var}}

\newcommand{\cC}{\mathcal{C}}

\newcommand{\cN}{\mathcal{N}}

\newcommand{\bE}{\mathbb{E}}
\newcommand{\bP}{\mathbb{P}}
\newcommand{\bR}{\mathbb{R}}

\newcommand{\diff}{\,\mathrm{d}}

\newcommand{\col}{\mathsf{col}}

\newcommand{\wh}{\widehat}
\newcommand{\wt}{\widetilde}
\newcommand{\clip}{\mathsf{clip}}

\newcommand{\veps}{\varepsilon}

\newcommand{\score}{\mathsf{score}}
\newcommand{\disteq}{\overset{\mathrm d}=}

\newcommand{\polylog}{\mathsf{poly}\log}

\newcommand{\proj}{\mathsf{proj}}

\newcommand{\supp}{\mathsf{supp}}
\newcommand{\low}{\mathsf{low}}

\theoremstyle{plain} 
\newtheorem{lemma}{\bf Lemma} 

\newtheorem{theorem}{\bf Theorem}
 
\newtheorem{claim}{\bf Claim}

\theoremstyle{remark}
\newtheorem{assumption}{\bf Assumption}

\newtheorem{remark}{Remark}

\icmltitlerunning{
  Diffusion Models Are Statistically Optimal for Learning Low-Dimensional Multi-Modal Distributions
  }

\begin{document}

\twocolumn[
  \icmltitle{
    Diffusion Models Are Statistically Optimal for Learning \\Low-Dimensional Multi-Modal Distributions
    }

  \icmlsetsymbol{equal}{*}

  \begin{icmlauthorlist}
    \icmlauthor{Jingda Wu}{yyy}
    \icmlauthor{Changxiao Cai}{yyy}
    
  \end{icmlauthorlist}

  \icmlaffiliation{yyy}{Department of Industrial and Operations Engineering, University of Michigan, Ann Arbor, USA}

  \icmlcorrespondingauthor{Changxiao Cai}{cxcai@umich.edu}
  
  \icmlkeywords{Diffusion models, Sample complexity, Low-dimensional structure, Minimax optimality}

  \vskip 0.3in
]

\printAffiliationsAndNotice{}

\input{abstract}
\input{introduction}
\input{problem_formulation}

\input{Results}

\input{Proof_Overview}
\input{Numerical_results}

\input{discussions}

\section*{Acknowledgements}
C.\ Cai is supported in part by the NSF grant DMS-2515333.

\section*{Impact Statement}
This paper presents work whose goal is to advance the field of diffusion model theory. There are many potential societal consequences of our work, none of which we feel must be specifically highlighted here.

\bibliographystyle{icml2026}
\bibliography{reference}

\appendix
\onecolumn
\input{pf-of-theorems}
\input{pf-of-lemmas}
\input{pf-of-claims}
\input{Auxiliary_lemmas}

\end{document}

%% file: abstract.tex
\begin{abstract}
Score-based diffusion models have demonstrated remarkable empirical success in learning high-dimensional distributions, particularly those exhibiting low-dimensional and multi-modal structures. However, theoretical understanding of their statistical efficiency remains limited. Existing theories typically rely on strong regularity assumptions, such as uniformly bounded densities or globally smooth score functions, which fail to capture such intrinsic structures.
In this work, we study the sample complexity of diffusion models for learning distributions supported on a union of low-dimensional subspaces. Assuming that the data distribution within each subspace is subgaussian, we show that diffusion models require at most the order of $\widetilde{O}(\varepsilon^{-k \vee 2})$ (up to some logarithmic factor) samples to achieve $\varepsilon$ sampling error in 1-Wasserstein distance, where $k$ is the intrinsic dimension.
This near-optimal convergence rate depends only on the intrinsic dimension and significantly improves upon prior theoretical guarantees that suffer from the curse of dimensionality. Notably, our analysis applies to a broad collection of distributions without imposing smoothness, bounded-density, or log-concavity assumptions.
Overall, our results show that diffusion models can statistically adapt to intrinsic low-dimensional structure while naturally accommodating multi-modal data, offering a rigorous theoretical justification for their success in complex high-dimensional learning tasks.
\end{abstract}

%% file: introduction.tex
\section{Introduction}
Score-based diffusion models \citep{sohldickstein2015deepunsupervisedlearningusing,song2020generativemodelingestimatinggradients} have achieved state-of-the-art performance across a wide range of generative modeling applications, including image and video generation (\citealp{ho2020denoisingdiffusionprobabilisticmodels}; \citealp{ho2022videodiffusionmodels}), signal processing \citep{song2022solvinginverseproblemsmedical}, and language modeling \citep{austin2023structureddenoisingdiffusionmodels,nie2025largelanguagediffusionmodels}.
At a high level, diffusion models generate samples by starting from Gaussian noise and iteratively denoising via a learned reverse-time diffusion dynamics. This procedure relies on accurate estimation of the \textit{score function} (the gradient of the log-density) along a forward noising process.

Diffusion models can be viewed as unsupervised distribution learners---given finite training samples from an unknown data distribution, they aim to generate new samples that faithfully follow the same law. 
This perspective raises a fundamental statistical question: 
\textit{how many training samples are needed for diffusion-based sampling to accurately learn the underlying data distribution, and can this sample complexity match the information-theoretic limit?}
At a conceptual level, diffusion sampling is inherently two-stage: it first uses training data to estimate the time-indexed scores along a forward diffusion process, and then plugs these learned scores into an iterative sampling procedure to generate an output.
Therefore, addressing the above question calls for sample complexity guarantees that jointly control 
both the score estimation error and the error accumulated during sampling.

\paragraph{Leveraging intrinsic structures.}
Motivated by this, a growing body of statistical theory has been developed to understand the sample complexity of diffusion models \citep{shah2023learning,oko2023diffusionmodelsminimaxoptimal,chen2024learning,cole2024score,li2024good,dou2024optimalscorematchingoptimal}.

For a broad class of $d$-dimensional distributions with $\beta$-H\"older smooth densities (without assuming smooth scores or log-concave/uniformly-bounded densities), state-of-the-art theory shows that both DDPM \citep{zhang2024minimaxoptimalityscorebaseddiffusion} and DDIM \citep{cai2025minimaxoptimalityprobabilityflow} require on the order of  (up to logarithmic factors)
\begin{align}\label{eq:sample complexity general}
\veps^{-\frac{d+2\beta}{\beta}}
\end{align}
training samples to generate an output within $\veps$ total variation (TV) distance to the target distribution.
While this sample complexity is (nearly-)minimax optimal for general smooth-density classes, it suffers from the curse of dimensionality as the ambient dimension $d$ grows. Consequently, such guarantees fail to fully explain the empirical effectiveness of diffusion models in modern high-dimensional applications, suggesting that the worst-case bounds in \eqref{eq:sample complexity general} may be overly pessimistic for structured distributions arising in practice.

To narrow this gap, recent work has explored whether, and in what sense, diffusion models can exploit intrinsic structures underlying the data distributions. 
However, statistical theory of diffusion models for structured data distributions remains far from complete.
Existing results in this direction typically focus on distributions supported on a \textit{single} low-dimensional structure, such as a linear subspace or manifold \citep{chen2023scoreapproximationestimationdistribution,tang24a,azangulov2025convergencediffusionmodelsmanifold,yakovlev2025generalization}, a factor model \citep{chen2025diffusion}, or certain dependence structures \citep{fan2025optimalestimationfactorizabledensity}.
Although these works establish improved sample complexities governed by intrinsic rather than ambient dimension, they require strong assumptions on the data distribution. A prominent example is the requirement for the density to be uniformly bounded away from zero on its support.
While this condition is standard in the nonparametric statistics literature and technically convenient, it excludes important multi-modal structures with well-separated components, where the density necessarily becomes small, or even vanishes, between modes.

More fundamentally, the prevailing ``single manifold/subspace'' paradigm limits our theoretical understanding of diffusion models' capabilities to 
learn heterogeneous distributions whose different modes concentrate near different low-dimensional structures. Such geometry is common in modern high-dimensional data, where distinct classes or clusters may occupy separate subspaces or manifolds \citep{Vidal2010ATO,brown2022verifying}.
As a result, existing theory still falls short of explaining the empirical effectiveness of diffusion models when learning low-dimensional, multi-modal distributions.

\subsection{Main contributions}
In this paper, we develop a statistical theory for diffusion models, aimed at understanding how many samples are required to learn low-dimensional, multi-modal distributions.

Concretely, we consider a target data distribution $p^\star$ supported on a union of subspaces (UoS), i.e., 
\begin{align*}
    \supp (p^\star) \subseteq \cup _{i=1}^M V_i,     
\end{align*}
where each $V_i$ is a linear subspace with dimension $k_i$. We denote by $k \defn \max_{i\in[M]} k_i$ the maximum intrinsic dimension. In addition, we assume that the restriction of the target distribution to each subspace is $\sigma$-subgaussian.

Under these two assumptions, we construct a kernel-based regularized score estimator $\wh s_t$ for the score function of the Gaussian-smoothed distribution $p_t \defn p^\star \ast \cN(0,tI_d)$ for any $t>0$.
Given $n$ samples drawn from the target distribution $p^\star$, we establish a finite-sample $L^2$ score estimation error bound:
\begin{align*}
    \mathop{\bE}\big[\|\widehat{s}_t(X)-s_t^\star(X)\|_2^2\big]
    = \wt O\bigg(\frac{1}{n}
    \Big( \frac{1}{t}+\frac{1}{t^{{(k\vee2)}/2+1}}\Big) \bigg).
\end{align*}
Here the expectation is taken over both the training data and $X\sim p_t$, where we write $a\vee b \defn \max\{a,b\}$.

Building on this score estimation guarantee, we prove that diffusion samplers require at most the order of (up to logarithmic factors)
\begin{align*}
 \veps^{- (k \vee 2)}
\end{align*}
training samples to generate a sample that is $\veps$-close in 1-Wasserstein distance to the target distribution $p^\star$. 
Importantly, this convergence rate depends only on the intrinsic dimension $k$, rather than the ambient dimension $d$, and matches the minimax optimal rate for learning a $k$-dimensional distribution \citep{chewi2024statisticaloptimaltransport}.
Moreover, our theory requires only subgaussian tails on each subspace, without imposing any restrictive assumptions on scores or densities. As a result, our framework naturally accommodates multi-modal distributions with well-separated components.

Finally, we emphasize that the kernel-based score estimator developed in this paper is primarily a theoretical proof device, rather than a practical alternative to neural network (NN)-based score estimation. Nevertheless, our results provide an important step toward statistical guarantees for NN score-based diffusion models. 
In particular, they establish the achievability of the fundamental statistical limit and identify the structural properties that analysis of NN score estimators should capture, while also providing an explicit low-dimensional target for NN approximation. More discussion on extensions to NN-based scores can be found in Section \ref{sec:discussion}.

\subsection{Related works}
\label{sec:related works}

\paragraph*{Statistical theory for diffusion models.}
Recent work has begun to provide finite-sample guarantees for diffusion-based sampling by studying statistical bounds for score estimation error.
Under strong density regularity assumptions (e.g., boundedness on compact domains), \citet{oko2023diffusionmodelsminimaxoptimal} showed that neural-network-based ERM score estimators lead to minimax-optimal rates in both TV and $W_1$ distances when used with reverse SDE samplers.
Using nonparametric constructions under density lower-bound conditions, \citet{dou2024optimalscorematchingoptimal} derived minimax-optimal score estimation rates and corresponding sampling guarantees.
More recently, for subgaussian targets with $\beta$-H\"older smooth densities, \citet{zhang2024minimaxoptimalityscorebaseddiffusion} established minimax-optimality for DDPM using truncated kernel score estimators, and \citet{cai2025minimaxoptimalityprobabilityflow} obtained an end-to-end minimax-optimal convergence analysis for ODE-based diffusion (DDIM/probability flow) by combining smoothed score estimation with convergence analysis of the sampling dynamics.

\paragraph*{Adaptation to low-dimensional structures.}
For distributions supported on a linear subspace, \citet{chen2023scoreapproximationestimationdistribution} established convergence rates governed by the subspace dimension under smooth score assumptions. 
For manifold-supported targets, \citet{azangulov2025convergencediffusionmodelsmanifold} proved analogous intrinsic-dimension rates, but the analysis requires controlling geometric approximation error (e.g., via Hausdorff distance) and typically relies on density lower-bound conditions on the support.
Beyond geometric support constraints, \citet{fan2025optimalestimationfactorizabledensity} obtained minimax-optimal rates for diffusion learning under structured dependence (exponential-interaction) models.
In addition, \citet{wang2025diffusionmodelslearnlowdimensional} analyzed mixtures of low-rank Gaussians, focusing on the special case of orthogonal subspaces. \citet{boffi2025shallow} showed that shallow NN-based diffusion models can provably adapt to hidden low-dimensional subspace structure under independent component data models and smoothness assumptions on the latent scores.

Complementary to the statistical perspective, a parallel line of work studies whether the sampling stage of diffusion models can automatically exploit low-dimensional data structure \citep{li2024adaptingunknownlowdimensionalstructures,liang2025lowdimensionaladaptationdiffusionmodels,potaptchik2024linear,huang2024denoising}. These works show that the iteration complexity, the number of sampling iterations required to achieve a desired accuracy, also depends only on the intrinsic dimension rather than the ambient dimension. In addition, low-dimensional adaptation has also been investigated for discrete diffusion models when learning discrete distributions \citep{li2025breaking,zhao2026adaptation,cai2026confidence,chen2025optimal,dmitriev2026efficient}.

\subsection{Notation}
For $a,b\in\bR$, we denote $a \vee b \defn \max\{a,b\}$ and $a\wedge b \defn \min\{a,b\}$.
For positive integer $M$, let $[M]\defn\{1,\cdots,M\}$. 
For random vector $X$, we use $p_X$ to denote its distribution or probability density function, depending on the context.
For any vector $x\in \mathbb{R}^d$, we denote $\| \cdot \|_p$ as its $p$-norm, i.e., $\|x\|_p \defn (\sum_{i=1}^d |x_i|^p)^{1/p}$, and write $\|x\|_{\infty} \defn \max_{i}|x_i|$. We use $\|\cdot\|$ to denote the 2-norm for simplicity. 
For any vector $x\in \mathbb{R}^d$ and any $i,j\in[d]$ with $i<j$, we denote by $x_{i:j} \in \mathbb{R}^{j-i+1}$ the subvector consisting of the $i$-th through $j$-th entries of $x$.
For vectors $\{\alpha_i\}_{i=1}^k$, we denote by $\mathsf{span}(\{ \alpha_i \}_{i=1}^k) $ the linear space spanned by these vectors.

For a probability distribution $p$ and a random vector $X$, we write $X \sim p$ to mean that $X$ follows the distribution $p$. We denote by $\supp(p)$ the support of probability measure $p$, i.e., the smallest closed set $S$ such that $p(S)=1$.
In addition, let $\mathds{1}\{ \cdot \}$ denote the indicator function. 
For random vectors $X, Y$,
we define the 1-Wasserstein distance between their distributions $p_X$ and $p_Y$ by
\begin{align*}
    W_1(p_X,p_Y) \defn \inf_{\gamma \in \Gamma(p_X,p_Y)}\iint \|x-y\| \, \gamma(\!\diff x,\!\diff y),
\end{align*}
where $\Gamma(p_{X},p_{Y})$ denotes the set of couplings of $p_X$ and $p_Y$. 
For probability distributions $P,Q$, we denote their convolution by $P\ast Q$.
Finally, we use $\mathsf{poly}(n)$ to denote a polynomial function of $n$ where the specific degree may vary across different contexts.

%% file: problem_formulation.tex
\section{Problem formulation}
\label{sec:problem formulation}
\subsection{Preliminaries}
In this section, we briefly introduce the score-based diffusion models.

\paragraph*{Forward process.}
The forward process starts from the target distribution $p^\star$ and gradually adds Gaussian noise.
A popular choice is the Ornstein-Uhlenbeck (OU) process \cite{song2021scorebasedgenerativemodelingstochastic}:
\begin{align}
    \diff X_t = -X_t \diff t + \sqrt{2} \diff B_t, \quad \text{with}\quad X_0 \sim p^\star.  \label{eq:forward SDE}
\end{align}
Here $(B_t)_{t\in [0,T]}$ is a standard Brownian motion in $\mathbb{R}^d$. A key property of this OU process is that the conditional distribution of $X_t$ given $X_0$ remains Gaussian for all $t$. More precisely, one can verify that
\begin{align}
    X_t \mid X_0 \disteq c_t X_0 + \sigma_t W_t \label{eq:forward conditional dist}
\end{align}
where $c_t \defn e^{-t}$, $\sigma_t \defn \sqrt{1-e^{-2t}}$, and $W_t \sim \mathcal{N}(0,I_d)$ is independent of $X_0$. 
In particular, the parameter $t$ fully determines the noise level of the forward process. As $t$ becomes sufficiently large, $c_t$ approaches zero and the distribution of $X_t$ becomes close to the standard Gaussian distribution $\mathcal{N}(0,I_d)$.

\paragraph*{Reverse process.}
Running the forward dynamics backward in time transforms Gaussian noise into samples from $p^\star$, forming the basis of diffusion-based sampling. For the OU process in \eqref{eq:forward SDE}, its time-reversal SDE is given by
\begin{align}
    Y_0 &\sim p_{X_T}, \notag\\
    \diff Y_t &= \big(Y_t + 2\nabla \log p_{X_{T-t}}(Y_t)\big) \diff t + \sqrt{2} \diff \overline{B}_t.
    \label{eq:reverse SDE}
\end{align}
Here $p_{X_{T-t}}$ denotes the density of the forward process (\ref{eq:forward SDE}) at time $T-t$ and $\{ \overline{B}_t\}_{t\in [0,T]}$ is a standard Brownian motion in $\mathbb{R}^d$. By classical time-reversal results for SDEs \citep{ANDERSON1982313}, this process satisfies $Y_{T-t}\disteq X_t$ for all $t\in[0,T]$.

The crucial ingredient in the reverse dynamics is the score function of the marginal distributions of the forward process. For a random vector $X \in \mathbb{R}^d$ with density $p_X$, its \textit{score function} is given by
\begin{align}
    s^\star_{X}(x) \defn \nabla \log p_{X}(x) = \frac{\nabla p_{X}(x)}{p_{X}(x)}. \label{eq:score function}
\end{align}
Since these scores are unknown in practice, they must be estimated from training samples $\{X^{(i)}\}_{i=1}^n$ drawn from $p^\star$.

\paragraph*{Sampling procedure.}
Since $X_T\overset{\mathrm d}{\to}\mathcal{N}(0,I_d)$ as $T\to\infty$, diffusion-based sampling can be implemented by initializing the reverse dynamics from $\mathcal{N}(0,I_d)$ and replacing the true score $s_{X_t}^\star$ with a learned estimator $\widehat{s}_{X_t}$. 
The resulting procedure is summarized in Algorithm~\ref{alg:reverse SDE sampling}. 
We introduce an early stopping time $\tau>0$ to avoid the small-time regime, where score estimation is most challenging. In practice, the reverse SDE can be implemented using numerical methods such as Euler-Maruyama.

\begin{algorithm}[t]
\caption{Sampling procedure}
\label{alg:reverse SDE sampling}
\begin{algorithmic}[1]
\STATE \textbf{Input:} Early stopping time $\tau >0$, end time $T>0$, score estimator $\widehat{s}_{X_t}$ for $t\in [\tau,T]$.
\STATE Sample $ y \sim \mathcal{N}(0,I_d)$.
\STATE Solve the reverse SDE:
\begin{align}
     \diff \widehat{Y}_t = \big(\widehat{Y}_t + 2\widehat{s}_{X_{T-t}}(\widehat{Y}_t)\big) \diff t + \sqrt{2} \diff B_t \label{eq:reverse SDE in alg}
\end{align}
for $t\in [0,T-\tau]$ with $\widehat{Y}_0 = y$.
\STATE \textbf{Output:} generated sample $\widehat{Y}_{T-\tau}$.
\end{algorithmic}    
\end{algorithm}

\paragraph*{Score estimation reduction.}
To estimate the score function $s^\star_{X_t}$, it is often more convenient to construct score estimator for the following variance-exploding (VE) process
\begin{align}
    \diff Z_t = \diff B_t, \quad  \text{with } Z_0 \sim p^\star. \label{eq:VE process}
\end{align}
Here $\{B_t\}_{t\geq 0}$ also denotes the Brownian motion and thus $Z_t$ follows the distribution $p^\star  \ast \mathcal{N}(0,tI_d)$. 
It is straightforward to verify that the score functions of $X_t$ and $Z_t$ satisfy
\begin{align}
    s^\star_{X_t}(x) = \frac{1}{c_t} s^\star_{Z_{h(t)}} (\frac{x}{c_t}) \quad \text{with}\quad h(t)\defn  \frac{\sigma_t^2}{c_t^2}. \label{eq:score between VE process and VP} 
\end{align}
As a result, it suffices to estimate the score function of $Z_t$ for any $t > 0$ and then define $\widehat{s}_{X_t}(x)\defn \frac{1}{c_t} \widehat{s}_{Z_{h(t)}}(\frac{x}{c_t})$ as the estimator of $s^\star_{X_t}(x)$. 
For notational simplicity, we denote by $s_t^\star \defn \nabla \log p_{Z_t}$ the score function of $Z_t$, and let $\widehat{s}_t$ denote its estimator.
The derivation of (\ref{eq:score between VE process and VP}) is provided in Appendix~\ref{pf-of-thm:thm:high prob TV}.

\subsection{Assumptions}
In this section, we introduce the assumptions imposed on the target distribution $p^\star$.

First, to capture low-dimensional, multi-modal structure, we assume that the support of $p^\star$ is contained in a finite union of low-dimensional linear subspaces.
\begin{assumption}[Union of low-dimensional subspaces]
    \label{assume:multi-modal}
    There exist linear subspaces $V_1, V_2, \ldots, V_M \subseteq \mathbb{R}^d$, with dimension $\mathsf{dim}(V_i) = k_i$, such that 
    \begin{align*}
        \supp(p^\star) \subseteq \cup_{i=1}^M V_i.
    \end{align*}
    Moreover, $p^\star$ assigns zero probability to intersections between different subspaces, i.e.,
    \begin{align}
        p^\star(V_i \cap V_j) = 0, \quad \forall i\neq j. \label{eq:separation}
    \end{align}
    Finally, each subspace has non-trivial mass:
    \begin{align}
        p^\star(V_i) \geq \frac{1}{c_{p}M}, \quad \forall i\in [M]
    \end{align}
    for some constant $c_{p}>0$. 
\end{assumption}

This assumption provides a tractable abstraction for low-dimensional, multi-modal distributions, where different modes may concentrate on different subspaces. Such union-of-subspaces structure has been widely used in the modeling of heterogeneous high-dimensional data \citep{wang2025diffusionmodelslearnlowdimensional} and has also been observed empirically in real-world datasets \citep{brown2022verifying,kamkari2024geometric}.

For each $i\in [M]$, let $p_i^\star \defn p^\star \mid_{V_i}$ denote the restriction of the target distribution $p^\star$ to subspace $V_i$. By Assumption~\ref{assume:multi-modal}, we can decompose the target distribution as $p^\star = \sum_{i=1}^M p_i^\star$.

For each subspace $V_i$, let $A_i \in \mathbb{R}^{d\times k_i}$ be a matrix whose columns form an orthogonal basis of $V_i$: 
\begin{align*}
    V_i = \mathsf{span}(\col(A_i)), \quad A_i^{\top}A_i = I_{k_i}.
\end{align*}
Denote by $\proj_i:\mathbb{R}^d \rightarrow V_i$ the projection onto $V_i$, given by
\begin{align*}
    \proj_i(x) = A_iA_i^{\top}x.
\end{align*}
\begin{remark}
    Our framework can be extended naturally to distributions concentrated near a union of low-dimensional subspaces. In this paper, we focus on the noiseless setting to isolate the essential roles of low-dimensional structure and multi-modality, without introducing the additional technical complications caused by ambient noise. Extensions to noisy settings are discussed in Section~\ref{sec:discussion}.
\end{remark}

Next, we impose a mild subgaussian assumption on the target distribution within each subspace.
\begin{assumption}[Subgaussian within each subspace]
    \label{assump:sub-gaussian target}
    Let $p_i^\low$ be the normalized push-forward distribution of $p_i^\star$ onto $\mathbb{R}^{k_i}$ under $A_i^{\top}$: 
    \begin{align}
        p_i^\low \defn \mathsf{law}(A_i^{\top}Z), \quad Z \sim \frac{p_i^\star}{p^\star(V_i)}. \label{eq:dist in each low-dim}
    \end{align}
    We assume that $p_i^\low$ is $\sigma_i$-subgaussian, that is, for any unit vector $\theta \in \mathbb{R}^{k_i}$ with $\|\theta\|_2 = 1$,
    \begin{align*}
        \mathbb{E} \bigl[\exp\bigl( (X^{\top}\theta/\sigma_i)^2\bigr)\bigr] \leq 2, \quad X \sim p_i^\low.
    \end{align*}
    We denote $\sigma \defn \max_{i\in [M]} \sigma_i$.
\end{assumption}

The subgaussian assumption is fairly mild in the sense that it subsumes any distribution with bounded support, which covers a wide range of practical data such as image data.

%% file: Results.tex
\section{Main results}
\label{sec:theoretical results}
This section introduces our score estimator and presents theoretical guarantees for both score estimation and sampling.

\subsection{Algorithm}
Given $n$ i.i.d.\ samples $\{ X^{(i)}\}_{i=1}^n$ drawn from the target distribution $p^\star$, our goal is to build a score estimator $\widehat{s}_t$ that learns the score function $s_t^\star$ of the perturbed data distribution  $p_t = p^\star \ast \cN(0,tI_d)$ for any time $t>0$.

\paragraph{Motivation.}
Observe that the density $p_t$ can be written as
\begin{align*}
    p_t(x) &= \big(p^\star \ast \cN(0, t I_d)\big)(x) =  \int \varphi_t(x-y;d) p^\star(\!\diff y),
\end{align*}
where $\varphi_t(x;d) \defn (2\pi t)^{-d/2}\exp\bigl(-\|x\|_2^2/(2t)\bigr)$ is the density of $\mathcal{N}(0,tI_d)$ in $\bR^d$.
Recall that under the UoS assumption, we can decompose $p^\star = \sum_{i=1}^M p_i^\star$, yielding
\begin{align*}
    p_t(x) &= \sum_{i=1}^M \int_{V_i} \varphi_t(x-y;d) p_i^\star( \!\diff y). 
\end{align*}
The gradient of $p_t$ can then be computed as
$$ \nabla p_t(x) = \sum_{i=1}^M \int_{V_i} -\frac{x-y}{t} \varphi_t(x-y;d) p_i^\star( \!\diff y) .$$
Therefore, the score function $s_t^\star = {\nabla p_t}/{p_t}$ of $p_t$ admits the following mixture-type decomposition:
\begin{subequations}\label{eq:score decomp}
\begin{align}
    s_t^\star(x) &= \sum_{i=1}^M \frac1{p_t(x)}\int_{V_i} -\frac{x-y}{t} \varphi_t(x-y;d) p_i^\star(\!\diff y) \notag\\
    & \defnrev \sum_{i=1}^M w_t(i,x) \cdot s_t(i,x),
\end{align}
where we define
\begin{align}
    w_t(i,x) &\defn \frac{\int_{V_i} \varphi_t(x-y;d) p_i^\star(\!\diff y)}{p_t(x)}\defnrev \frac{q_t(i,x)}{p_t(x)}, \label{eq:defn w_t(i,x)} \\
    s_t(i,x)&\defn \frac{\int_{V_i} -\frac{x-y}{t} \varphi_t(x-y;d) p_i^\star(\!\diff y)}{\int_{V_i} \varphi_t(x-y;d) p_i^\star(\!\diff y)} \label{eq:defn s_t(i,x)}.
\end{align}
\end{subequations}
Intuitively, $w_t(i,x)$ can be interpreted as the posterior probability (or effective mixture weight) that $x$ originates from the $i$-th subspace after Gaussian smoothing, while $s_t(i,x)$ is the score of the corresponding smoothed component.

Our key observation is that each score component $s_t(i,x) \in \bR^d$ admits a favorable normal-tangent decomposition. 
The normal part is essentially the score of a time-dependent Gaussian distribution and has a closed-form expression, while the tangent component is determined entirely by a $k_i$-dimensional score function \citep{chen2023scoreapproximationestimationdistribution}:
\begin{align}
    s_t(i,x) = -\frac{1}{t}\big(x-\proj_i(x)\big) + A_i s_t^\low(i,A_i^{\top}x), \label{eq:s_t(i,x) decomp}
\end{align}
where $\proj_i(x)\defn  A_i A_i^{\top} x$ is the projection of $x$ onto subspace $V_i$, and $s_t^\low(i,\cdot):\mathbb{R}^{k_i}\rightarrow \mathbb{R}^{k_i}$ is the score function of the $k_i$-dimensional smoothed distribution $p^\low_i \ast \cN(0,tI_{k_i})$ (see \eqref{eq:dist in each low-dim}) on the subspace $V_i$.
This decomposition reduces score estimation to a low-dimensional problem: once $V_i$ is identified, estimating $s_t(i,\cdot)$ is governed by the difficulty of estimating the low-dimensional score $s_t^\low(i,\cdot)$ in dimension $k_i$, rather than the ambient dimension $d$.

Motivated by this observation, we propose a two-step score estimation procedure.
We first use the data to estimate the subspaces $\{A_i\}_{i=1}^M$ and construct a classification function $c:\mathbb{R}^d\to[M]$ that assigns points to subspaces (such that $c(x)=i$ if and only if $x\in V_i$).
Given these estimates, we then estimate the component scores and mixture weights, and combine them to form the full score estimator.
For theoretical clarity, we employ sample splitting, where $n_0$ samples are used for subspace recovery and the remaining $N = n-n_0$ samples are used for score estimation.

In what follows, we describe the proposed score estimator in reverse order.

\paragraph{Score estimator.}

We begin by presenting the score estimator assuming access to subspace estimates $\{A_i\}_{i=1}^M$
and a classification function $c(\cdot)$.

Inspired by the score decomposition in (\ref{eq:score decomp}), we construct the score estimator as a weighted combination of score components associated with each subspace:
\begin{align}
    \widehat{s}_t(x) \defn  \sum_{i=1}^M \widehat{w}_t(i,x) \widehat{s}_t(i,x) \label{eq:score estimator},
\end{align}
where $\widehat{w}_t(i,x)$ and $\widehat{s}_t(i,x)$ estimate $w_t(i,x)$ in \eqref{eq:defn w_t(i,x)} and $s_t(i,x)$ in \eqref{eq:defn s_t(i,x)}, respectively. 
\begin{itemize}

\item \textit{Score component estimator $\widehat{s}_t(i,x)$.}
In light of the low-dimensional structure in (\ref{eq:s_t(i,x) decomp}), it suffices to learn an estimator $\widehat{s}_t^\low$ for each $k_i$-dimensional score $s_t^\low(i,\cdot)$, and then construct the estimator for the $d$-dimensional score $s_t(i,x)$ associated with $V_i$ as
\begin{align}
    \widehat{s}_t(i,x) \defn -\frac{x-\proj_i(x)}{t} + A_i \widehat{s}_t^\low(i,A_i^{\top}x).
    \label{eq:s_hat_t(i,x) decomp}
\end{align}

To build $\widehat{s}_t^\low$, let $\cC_i \defn \{j\in [N]: X^{(j)} \in V_i \}$ denote the index set of samples belonging to subspace $V_i$. 
Since $s_t^\low(i,\cdot)$ is the score  of the smoothed low-dimensional distribution $p_i^\low\ast\mathcal{N}(0,tI_{k_i})$,   
we first estimate its density using the Gaussian kernel estimator
\begin{align}
    \widehat{g}_t(i,x) \defn \frac{1}{|\cC_i|} \sum_{j\in \cC_i} \varphi_{t}(x - A_i^{\top} X^{(j)};k_i) \label{eq:estimator for p_t^l(i,x)},
\end{align}
where $\varphi_{t}(x;k_i)$ denotes the density of $\mathcal{N}(0,tI_{k_i})$. 
We then define the $k_i$-dimensional score estimator as
\begin{align}
    \widehat{s}_t^\low(i,x) \defn \clip_{R} \bigg( \frac{\nabla \widehat{g}_t(i,x)}{\widehat{g}_t(i,x)} \psi \Big(\widehat{g}_t(i,x);\frac{\log N}{N(2\pi t)^{k_i/2}}\Big)\bigg). \label{eq:low-dimensional score estimator}
\end{align}
Here, $\psi(x;\eta) \defn \ind\{x \geq \eta\}$ is a thresholding function, and the clip operator is defined by
\begin{align*}
\clip_r(z)\defn  
\begin{cases}
    z, &\|z\|_2 \leq  r; \\
    \frac{z}{\|z\|}\cdot r,&\text{otherwise}.
\end{cases}
\end{align*}
We set the clipping radius to be $R = \sqrt{2\log N/t}$.

In words, we first form the plug-in estimator $\nabla \widehat{g}_t/\widehat{g}_t$ using the kernel density estimator~\eqref{eq:estimator for p_t^l(i,x)}. 
We then apply a thresholding rule $\psi(\widehat{g}_t;\eta_t)$, which regularizes this ratio according to the estimated density level $\widehat{g}_t$ and the threshold $\eta_t = N^{-1}(2\pi t)^{-k_i/2}\log N$ that depends on the sample size $N$ and time $t$.
Specifically, in low-density regions where $\nabla \widehat{g}_t/\widehat{g}_t$ is unstable due to small denominators and limited data, the resulting score estimator $\widehat{s}_t$ is set to zero.
This regularization is important not only for controlling the subsequent estimation error, but also for improving generalization by preventing the estimator from closely fitting empirical artifacts.

\item \textit{Weight estimator $\widehat{w}_t(i,x)$.} 
As for the mixture weight $w_t(i,x)$ defined in (\ref{eq:defn w_t(i,x)}), we first construct Gaussian kernel density estimators for $p_t(x)$ and $q_t(i,x)$:
        \begin{subequations}
            \begin{align}
                \widehat{p}_t(x) \defn \frac{1}{N} \sum_{j=1}^N \varphi_t(x-X^{(j)};d), \label{eq:estimator for p_t(x)}
            \end{align}
            and for any $i\in [M]$,
            \begin{align}
                \widehat{q}_t(i,x) \defn \frac{1}{N} \sum_{j=1}^N \varphi_t(x-X^{(j)};d) \ind_{\{ c(X^{(j)})=i\}}. \label{eq:estimator for p_t(i,x)}
            \end{align}
        \end{subequations}
        We then define the weight estimator as
\begin{align}
    \widehat{w}_t(i,x):= \frac{\widehat{q}_t(i,x)}{\widehat{p}_t(x)} \ind_{\{ x \in \mathcal{G}_t(i)\}}, \label{eq:estimate w_t(i,x)}
\end{align}
where $\mathcal{G}_t(i)$ is a set given by
\begin{align}
    \mathcal{G}_t(i) \defn \Big\{ x: \| x-\proj_i(x)\|_2\leq R_t(i)\Big\},  \label{eq:regularization set}
\end{align}
with $R_t(i) = C_R\sqrt{ td \log (Ndt^{k_i/2})}$ for some universal constant $C_R >0$. 

In a word, the weight estimator $\widehat{w}_t(i,x)$ is a plug-in estimator for the true weight \eqref{eq:defn w_t(i,x)}, up to some low-probability set under $p_t$.
The indicator function $\ind_{\{ x \in \mathcal{G}_t(i)\}}$ is introduced for technical convenience in the analysis and could be removed with a shaper argument.

\end{itemize}

\paragraph{Subspace recovery.}
Finally, we briefly discuss how the subspace estimates $\{A_i\}_{i=1}^M$ and classification function $c(\cdot)$ can be obtained from training data.
This is a classical subspace clustering problem. Under standard identifiability and separation conditions, and assuming known bounds on the number of subspaces $M$ and the maximal intrinsic dimension $k$, several polynomial-time methods can recover the underlying subspaces and cluster assignments, such as sparse subspace clustering \cite{elhamifar2013sparsesubspaceclusteringalgorithm}, thresholding-based subspace clustering \cite{heckel2015robustsubspaceclusteringthresholding} and greedy subspace clustering \cite{park2014greedysubspaceclustering}.
From a statistical perspective, this geometric recovery step is typically less demanding than learning the full target distribution.

\subsection{Theoretical guarantees}

We now state our theoretical guarantees for the proposed score estimator and the resulting sampler.

We first present the $L^2$ error for the proposed score estimator $\wh s_t$ in (\ref{eq:score estimator}).
The proof can be found in Appendix \ref{pf-of-thm:thm:score estimation error}.
\begin{theorem}
    \label{thm:score estimation error}
    Suppose the target distribution $p^\star$ satisfies Assumptions~\ref{assume:multi-modal} and \ref{assump:sub-gaussian target}.
    Under the event of exact subspace recovery and $t\leq N^{O(1)}$, the $L^2$-error of the score estimator in (\ref{eq:score estimator}) using $N$ samples satisfies
    \begin{align*}
        &\mathbb{E}\big[\|\widehat{s}_t(X)-s_t^\star(X)\|_2^2\big] \\
        & \quad \leq  C_\score \frac{dM^3}{N} \Big( \frac{1}{t}+\frac{\sigma^{k\vee2}}{t^{{(k\vee2)}/2+1}}\Big)  \polylog N 
    \end{align*}
    for some constant $C_\score >0$ independent of $N$, $d$, $M$ and $t$.
    The expectation here is taken over the i.i.d.\ training samples $\{ X^{(i)}\}_{i=1}^N$ used for score estimation and $X \sim p_t$.
\end{theorem}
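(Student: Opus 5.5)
Under exact subspace recovery, $c(X^{(j)})=i$ if and only if $X^{(j)}\in V_i$, so $\widehat q_t(i,\cdot)$ is an unbiased kernel estimator of $q_t(i,\cdot)$. We split the error using the mixture decomposition \eqref{eq:score decomp}:
\begin{align*}
\widehat s_t - s_t^\star = \sum_{i=1}^M \widehat w_t(i,x)\big(\widehat s_t(i,x)-s_t(i,x)\big) + \sum_{i=1}^M \big(\widehat w_t(i,x)-w_t(i,x)\big)s_t(i,x).
\end{align*}
By Cauchy--Schwarz, the squared norm is at most $2M\sum_i(\cdots)$ for each sum; this is one source of the $M$ factors. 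Both terms are then bounded pointwise in $x$, in expectation over the data, and integrated against $p_t$. A preliminary step is to exclude a low-probability region. For $X\sim p_t$ drawn from component $i$, the normal part $X-\proj_i(X)$ is Gaussian with variance $t$ in $d-k_i$ directions, so $\|X-\proj_i(X)\|\le R_t(i)$ except with probability $\mathsf{poly}(N^{-1})$. Similarly, $\|A_i^\top X\|\lesssim(\sigma+\sqrt t)\sqrt{\log N}$ with high probability. Every estimator is bounded (clipping at $R$, the weights lie in $[0,1]$, and the normal part is controlled on $\mathcal G_t(i)$), so these tail events contribute at most $\mathsf{poly}(d,\log N)/(tN)$.

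\textbf{Component term.} Since $\widehat w_t(i,x)\le 1$ and is zero off $\mathcal G_t(i)$, and since the normal parts of $\widehat s_t(i,\cdot)$ and $s_t(i,\cdot)$ coincide exactly, the error reduces to $\|\widehat s^\low_t(i,A_i^\top x)-s^\low_t(i,A_i^\top x)\|^2$ weighted by $w_t(i,x)$ plus weight error. The main point is that $\int w_t(i,x)f(A_i^\top x)\,p_t(\diff x)=p^\star(V_i)\,\bE f(Y)$ with $Y\sim p_i^\low\ast\cN(0,tI_{k_i})$. We can therefore invoke a $k_i$-dimensional kernel score estimation bound of the type in \citet{zhang2024minimaxoptimalityscorebaseddiffusion}, with $|\cC_i|\gtrsim N/(c_pM)$ with high probability by Assumption~\ref{assume:multi-modal}.

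For that low-dimensional bound we follow the standard argument. On the set where $g:=p_i^\low\ast\varphi_t\gtrsim\eta_t$, concentration of the kernel estimator gives a relative error $\mathrm{Var}(\widehat g)/g^2\lesssim 1/(|\cC_i|(2\pi t)^{k_i/2}g)$. For the gradient, $\mathrm{Var}(\nabla\widehat g)/g^2\lesssim \log N/(t|\cC_i|(2\pi t)^{k_i/2}g)$. Integrating against $g$ costs the Lebesgue volume of the high-probability region $\{\|y\|\lesssim(\sigma+\sqrt t)\sqrt{\log N}\}$. This produces $\frac{1}{Nt}\cdot\frac{(\sigma+\sqrt t)^{k}}{t^{k/2}}\polylog N\lesssim\frac{1}{N}\big(\frac1t+\frac{\sigma^k}{t^{k/2+1}}\big)\polylog N$. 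On the low-density set $g\lesssim\eta_t\log N$, the error is bounded by $\|s_t^\low\|^2+R^2\lesssim\log N/t$ times the mass $\int_{\{g\le\eta\}}g$, which is again at most $\eta_t\cdot\mathrm{vol}$, giving the same rate. The case $k=1$ is absorbed into the $k\vee 2$ exponent, since volume factors are handled via $\sigma^{k}t^{-k/2}\le \sigma^{k\vee2}t^{-(k\vee2)/2}+1$ up to logs.

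\textbf{Weight term.} This is the main obstacle: $w_t$ is a ratio of $d$-dimensional kernel estimates, whose naive variance scales like $(2\pi t)^{-d/2}$. The remedy is that $\varphi_t(x-y;d)$ for $y\in V_i$ factorizes as $\varphi_t(x-\proj_i x;d-k_i)\,\varphi_t(A_i^\top x-A_i^\top y;k_i)$. Writing $\widehat w_t(i,x)=\widehat q_t(i,x)/\sum_j \widehat q_t(j,x)$, each factor $\varphi_t(x-\proj_jx;d-k_j)$ is deterministic and cancels between numerator and denominator in relative terms. Hence $\widehat q_t(j,x)/q_t(j,x)-1=\widehat g_t(j,A_j^\top x)/g_t(j,A_j^\top x)-1$ up to the $|\cC_j|/N$ fluctuation.

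So $|\widehat w_t(i,x)-w_t(i,x)|\le\sum_j w_t(j,x)\,|\widehat q_t(j,x)/q_t(j,x)-1|$, which reduces to the same $k_j$-dimensional relative errors as before. It remains to multiply by $\|s_t(i,x)\|^2$. On $\mathcal G_t(i)$ and the high-probability region this is at most $R_t(i)^2/t^2+\log N/t\lesssim d\log N/t$, which explains the factor $d$. A careful argument must also handle points where $w_t(j,x)$ is small but the relative error is large. For this I would cap the relative error using the boundedness $|\widehat w-w|\le 1$, and bound the bad set's contribution by $\int w_t(j,x)\ind\{g_j\lesssim\eta\}\,p_t(\diff x)$, exactly as in the low-density case. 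Summing over $i,j$ yields the $M^3$ factor and completes the bound.
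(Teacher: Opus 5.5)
Your decomposition $\widehat w_t\widehat s_t(i,\cdot)-w_ts_t(i,\cdot)=\widehat w_t\big(\widehat s_t(i,\cdot)-s_t(i,\cdot)\big)+(\widehat w_t-w_t)s_t(i,\cdot)$ multiplies the weight error by the \emph{true} component score. Your bound $\|s_t(i,x)\|^2\lesssim R_t(i)^2/t^2+\log N/t$ on $\mathcal G_t(i)$ intersected with the high-probability region is false. $\mathcal G_t(i)$ controls only the normal part. The tangent part is the unclipped score $s_t^{\low}(i,A_i^\top x)$, which is large wherever $p_i^{\low}\ast\cN(0,tI_{k_i})$ is small. For example, if $p_i^{\low}=\frac12(\delta_{-a}+\delta_a)$, then $s_t^{\low}(i,y)=(a\tanh(ay/t)-y)/t$. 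For $t\ll a^2$ this has size $\asymp a/t\gg\sqrt{\log N/t}$ on $[a/4,3a/4]$, which lies inside your ball. This is exactly the multi-modal, no-density-lower-bound regime the theorem targets.

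Your devices do not close this gap. After multiplying the weight MSE (your relative-error bound, or Lemma~\ref{lem:MSE for weight}) by $p_t$, the integrating measure becomes $\sum_j(2\pi t)^{-d/2}N^{-1}e^{-\|x-\proj_j(x)\|^2/(2t)}\diff x$, i.e.\ Lebesgue measure along the tangent directions of $V_j$. Removing $\{g_j\lesssim\eta\}$ tames the score of component $i$ only in the terms with $j=i$, where $g_i\gtrsim\eta$ gives $\|s_t^{\low}(i,\cdot)\|^2\lesssim\log N/t$. For $j\neq i$ you must integrate $\|s_t^{\low}(i,A_i^\top x)\|^2$ along $V_j$ near $V_i$, where component $i$ may carry no mass. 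Bounding it there by $B_t^2/t^2$ in general loses a positive power of $\sigma^2/t$, and with it the $n^{-1/(k\vee2)}$ rate.

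Your component term has the same defect, since it carries $\widehat w_t^2$ rather than $w_t^2$. Reaching the $q_t(i,\cdot)$-weighted integral requires $\widehat w_t^2\le 2w_t^2+2(\widehat w_t-w_t)^2$, and the second piece again pairs a weight error with the unclipped $s_t^{\low}$ under $p_t$. Your tail step has the same issue: it appeals to boundedness of the estimators, but your terms contain true scores.

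The clean fix is the paper's pairing, $w_ts_t(i,\cdot)-\widehat w_t\widehat s_t(i,\cdot)=(w_t-\widehat w_t)\widehat s_t(i,\cdot)+w_t\big(s_t(i,\cdot)-\widehat s_t(i,\cdot)\big)$. With it:
the weight error multiplies the clipped estimator, which is deterministically bounded by $R_t(i)/t+\sqrt{2\log N/t}$ on $\mathcal G_t(i)$;
off $\mathcal G_t(i)$ one has $\widehat w_t=0$, and $w_t^2p_t\le q_t(i,\cdot)$ supplies a Gaussian tail beyond $R_t(i)$;
the component error multiplies the true weight, so $w_t^2p_t\le q_t(i,\cdot)$ factorizes exactly as you describe.

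With this change, your relative-error treatment of the weights is essentially Lemma~\ref{lem:MSE for weight}, and the bad-set worry disappears. Indeed, $w_t(j,x)\,\mathbb{E}\big[(\widehat q_t(j,x)/q_t(j,x)-1)^2\big]\,p_t(x)=\mathrm{Var}(\widehat q_t(j,x))/q_t(j,x)\le(2\pi t)^{-d/2}N^{-1}e^{-\|x-\proj_j(x)\|^2/(2t)}$, with no density lower bound needed.

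Separately, in your low-dimensional sketch the bound $\|s_t^{\low}\|^2\lesssim\log N/t$ does not hold on $\{g\lesssim\eta_t\}$. There you should use $\|s_t^{\low}(y)\|^2\le\frac2t\log\frac{1}{(2\pi t)^{k/2}g(y)}$ together with the monotonicity of $g\log(1/g)$, as in Lemma~8 of \citet{cai2025minimaxoptimalityprobabilityflow}.
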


In words, Theorem \ref{thm:score estimation error} shows that the convergence rate of the $L^2$ error (with respect to diffusion time $t$) of the proposed score estimator depends on the intrinsic dimension $k$, rather than the ambient dimension~$d$. 
This yields a substantial improvement over existing rate-optimal score estimation guarantees for general distributions  \citep{wibisono2024optimal,zhang2024minimaxoptimalityscorebaseddiffusion,dou2024optimalscorematchingoptimal,cai2025minimaxoptimalityprobabilityflow},which do not exploit intrinsic low-dimensional data structures and therefore suffer from the curse of dimensionality.

Moreover, we note that exact subspace recovery can be achieved with high probability using $n_0=C_{\mathsf{sc}}M^2k\log n$ samples for subspace clustering, for a sufficiently large constant $C_{\mathsf{sc}}>0$. This sample size is negligible compared with the remaining $N=n-n_0$ samples used for score estimation, provided that $n$ is sufficiently large.

We next translate the resulting score estimation guarantee into a sampling guarantee for the diffusion sampler. 
The proof is deferred to Appendix~\ref{pf-of-thm:thm:high prob TV}.

\begin{theorem}
    \label{thm:high prob TV}
    Suppose the target distribution $p^\star$ satisfies Assumptions~\ref{assume:multi-modal} and \ref{assump:sub-gaussian target}.
    Let $n_0 = C_{\mathsf{sc}}M^2 k \log n$ for some large constant $C_{\mathsf{sc}} > 0$ and $N=n-n_0$. Then for sufficiently large $n$, the output $\widehat{Y}_{T-\tau}$ of Algorithm \ref{alg:reverse SDE sampling}, using the score estimator in (\ref{eq:score estimator}) constructed from $N$ samples with  $T=\log n$ and $\tau = n^{-2/k}$, satisfies
    \begin{align}\label{eq:W_1 bound}
        \mathbb{E}\big[W_1(p^\star,p_{\widehat{Y}_{T-\tau}})\big] \leq C d M^{3/2} n^{-\frac1{k\vee 2}} \polylog n
    \end{align}
    for some constant $C>0$ independent of $n$, $d$ and $M$.
    Here the expectation is taken over the samples $\{X^{(i)}\}_{i=1}^n$.
\end{theorem}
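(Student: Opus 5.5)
We will prove Theorem~\ref{thm:high prob TV} by splitting the $W_1$ error into three parts, using the triangle inequality together with the data processing structure of the reverse SDE. We write $p_\tau$ for the law of the OU forward process $X_\tau$. The decomposition is
\begin{align*}
W_1(p^\star,p_{\widehat Y_{T-\tau}}) \le W_1(p^\star, p_{X_\tau}) + W_1(p_{X_\tau}, p_{\widehat Y_{T-\tau}}).
\end{align*}

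\textbf{Early stopping term.} We couple $X_0$ with $X_\tau=c_\tau X_0+\sigma_\tau W$. This gives
\begin{align*}
W_1(p^\star,p_{X_\tau}) \le (1-c_\tau)\bE\|X_0\|+\sigma_\tau\sqrt d \lesssim \tau\,\sigma\sqrt{k}+\sqrt{\tau d}.
\end{align*}
The bound $\bE\|X_0\|\lesssim \sigma\sqrt k$ holds because each $p_i^\low$ is subgaussian and $\|A_i^\top x\|=\|x\|$ on $V_i$. With $\tau=n^{-2/k}$ this is $O(\sqrt d\, n^{-1/k})$.

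\textbf{Sampling term.} To control $W_1$ by TV, we truncate to a ball: $W_1(P,Q)\le 2R_0\,\mathrm{TV}(P,Q)+\bE_P\|X\|\ind_{\|X\|>R_0}+\bE_Q\|X\|\ind_{\|X\|>R_0}$. We take $R_0\asymp \sqrt{d\log n}+\sigma\sqrt{\log n}$. The $P=p_{X_\tau}$ tail is negligible by Gaussian/subgaussian concentration. The $Q$ tail is controlled because the estimated drift is bounded: the clip in \eqref{eq:low-dimensional score estimator} and the set $\mathcal G_t(i)$ bound $\|\widehat s_t(x)\|$ polynomially in $n$, $d$ and $1/t$, so $\widehat Y$ has polynomially bounded moments. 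Its tail beyond $R_0$ is then bounded via Cauchy--Schwarz with the TV itself, or with a slightly larger $R_0$.

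For the TV, we use Pinsker together with Girsanov, in the standard form
\begin{align*}
\mathrm{KL}(p_{X_\tau}\,\|\,p_{\widehat Y_{T-\tau}}) \le \mathrm{KL}(p_{X_T}\|\cN(0,I_d)) + \int_\tau^T \bE\|\widehat s_{X_t}(X_t)-s^\star_{X_t}(X_t)\|^2\diff t.
\end{align*}
The first term is $\lesssim e^{-2T}(d+\sigma^2)\lesssim (d+\sigma^2)/n^2$ when $T=\log n$. For the integral, we use \eqref{eq:score between VE process and VP}, which gives $\bE\|\widehat s_{X_t}-s^\star_{X_t}\|^2=c_t^{-2}\bE\|\widehat s_{h(t)}-s^\star_{h(t)}\|^2$ under $Z_{h(t)}$. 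Here $h(t)\asymp t$ for small $t$ and $h(t)\le e^{2T}=n^2$, so Theorem~\ref{thm:score estimation error} applies, since $t\le N^{O(1)}$. It gives the integrand bound
\begin{align*}
\frac{dM^3}{N}\Big(\frac{1}{h}+\frac{\sigma^{k\vee2}}{h^{(k\vee2)/2+1}}\Big)c_t^{-2}\polylog N.
\end{align*}
For $t\ge1$ we have $c_t^{-2}/h(t)\approx 1$, so integrating up to $T$ contributes $O(\log n)$. For small $t$, the second term integrates to order $\tau^{-(k\vee2)/2}$. With $\tau=n^{-2/k}$ and $k\ge2$ this is $n$, which would cancel the $1/N$ factor and leave KL of order $dM^3\polylog n$. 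That is too large.

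\textbf{Main obstacle.} The naive Girsanov route just described fails to give a vanishing KL for the chosen $\tau$, so I expect this step to be the crux. My plan is to replace the single TV bound by a time-localized $W_1$ argument. First, run the reverse process only down to an intermediate time $t_0$ and apply the TV/Girsanov bound there, which gives KL $\lesssim \frac{dM^3}{N}t_0^{-(k\vee2)/2}\polylog n$. Then handle the interval $[\tau,t_0]$ separately. A cleaner alternative would be to set $t_0\asymp n^{-2/(k\vee2)}$, accept a bounded KL, and convert it into a $W_1$ bound of order $R_0\sqrt{\mathrm{KL}}$ at scale $\sqrt{t_0}$, using the fact that the process localizes near the subspaces. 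Concretely, the intended chain is:
\begin{itemize}
\item because of the normal-tangent structure \eqref{eq:s_t(i,x) decomp}, on a window of length $t_0$ the output moves by only $O(\sqrt{d t_0})$ in the normal direction and the tangent error lives in dimension $k$;
\item so the $W_1$ discrepancy accrued below $t_0$ is at most $O(\sqrt{d t_0}\polylog n)$;
\item the choice $t_0=n^{-2/(k\vee2)}$ then yields $O(\sqrt d\, n^{-1/(k\vee2)})$.
\end{itemize}

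\textbf{Combining.} The remaining factors then collect into the stated $dM^{3/2}n^{-1/(k\vee2)}\polylog n$: the $M^{3/2}$ comes from $\sqrt{M^3}$ in the KL, $d$ comes from $\sqrt d\cdot\sqrt d$ (the radius $R_0$ times $\sqrt{d/N}$), and $N\asymp n$ since $n_0=O(\log n)$. Finally, the event of failed subspace recovery has probability $\mathsf{poly}(n)^{-1}$, and on it the crude polynomial moment bound on $\widehat Y$ contributes negligibly to the expectation.
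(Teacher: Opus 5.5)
Your early-stopping bound, the truncated $W_1$-to-TV conversion, the handling of the failed-recovery event, and your diagnosis that one Girsanov--Pinsker step over $[\tau,T]$ leaves $\mathrm{KL}\asymp dM^3\polylog n$ are all sound. The gap is in the step you flag as the crux: it is asserted, not proved, and the version you sketch does not work. For $k\ge 2$ your cut point $t_0=n^{-2/(k\vee 2)}$ equals $\tau=n^{-2/k}$. So ``handling $[\tau,t_0]$ separately'' is vacuous, and you are back to the naive bound. The claim that the bounded KL on $[t_0,T]$ converts into $W_1$ ``at scale $\sqrt{t_0}$'' has no justification. What the reverse dynamics actually give is this: once a discrepancy is created at forward time $t$, the remaining dynamics move the state by at most $O(\sigma_t\sqrt{d\log n})$. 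Hence a TV discrepancy created at time $t$ costs $W_1$ of order $\sigma_t\sqrt{d\log n}$ times that TV, and KL accrued at times $t\gg t_0$ must be charged at scale $\sigma_t$, not $\sqrt{t_0}$. A genuine single cut, $W_1\lesssim\sqrt{dt_0\log n}+\sqrt{d\log n}\,\sqrt{\mathrm{KL}([t_0,T])}$, optimizes at $t_0=n^{-2/(k\vee2+2)}$ and yields only the rate $n^{-1/(k\vee 2+2)}$. Any bounded number of cuts still loses a polynomial factor.

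The missing ingredient is the multiscale stability bound \eqref{eq:W1_convergence_rate} that the paper imports from Lemma D.7 of Oko et al. It works in four steps:
\begin{enumerate}
\item partition $[\tau,T]$ dyadically with $T_{j+1}=2T_j$;
\item compare hybrid reverse processes that differ only on $[T_j,T_{j+1}]$;
\item couple them on that window via the path-space TV coupling given by Girsanov;
\item use that afterwards both the true and the estimated dynamics move the state by $O(\sigma_{T_{j+1}}\sqrt{d\log\delta^{-1}})$ with high probability.
\end{enumerate}
Step 4 rests on a uniform bound $\|\widehat s_{X_t}(x)\|\lesssim\sigma_t^{-1}$, up to $\sqrt{d\log n}$ factors, which the clip and the sets $\mathcal G_t(i)$ supply. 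It does not use the normal--tangent structure or any localization of the process near the subspaces. With $\int_{T_j}^{T_{j+1}}\bE\|\widehat s_{X_t}-s^\star_{X_t}\|_2^2\diff t\lesssim\frac{dM^3}{n}\big(1+T_j^{-(k\vee2)/2}\big)\polylog n$, the $j$-th summand is of order $\sqrt{dM^3/n}\,\big(1+T_j^{1/2-(k\vee2)/4}\big)$ up to logarithms. For $k>2$ the sum is a geometric series dominated by $T_0=\tau$; for $k\le 2$ it consists of $O(\log n)$ comparable terms. Multiplying by the prefactor $\sqrt d$ gives $dM^{3/2}n^{-1/(k\vee2)}\polylog n$. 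Your ``Combining'' arithmetic reproduces exactly the smallest-scale term of this sum, which is why your final numbers come out right. What is missing is the decomposition that justifies charging each portion of the KL at its own time scale; with it added, your argument becomes essentially the paper's.
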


Theorem~\ref{thm:high prob TV} provides the convergence rate of the $W_1$-sampling error for diffusion sampling equipped with the proposed score estimator. 
By exploiting the intrinsic low-dimensional data structure through kernel-based score estimation, the resulting convergence rate (with respect to sample size $n$) is governed by the intrinsic dimension $k$, rather than the ambient dimension $d$, with $d$ appearing only linearly through the prefactor.

Several remarks are in order:
\begin{itemize}
    \item \textit{(Near-)minimax optimality.} The minimax risk of estimating a $k$-dimensional density scales as $n^{-\frac{1}{k \vee 2}}$ \citep{chewi2024statisticaloptimaltransport}.
    Since sampling is always harder than density estimation, Theorem \ref{thm:high prob TV} shows that our sampling algorithm is minimax optimal (up to logarithmic factors).
    
    \item \textit{Sample complexity.} The error bound in \eqref{eq:W_1 bound} demonstrates that in order to achieve an $\veps$-accurate sampling in 1-Wasserstein distance, it suffices to have $\veps^{- (k \vee 2)}$ samples, up to logarithmic factors, thereby breaking the curse of dimensionality that plagues prior results \citep{wibisono2024optimal,zhang2024minimaxoptimalityscorebaseddiffusion,dou2024optimalscorematchingoptimal,cai2025minimaxoptimalityprobabilityflow}.
    \item \textit{Weak assumptions on the target distribution.} 
    Our results do not rely on stringent structural conditions commonly imposed in earlier work, such as smooth densities/scores, log-concavity, or exactly Gaussian components.
    As a consequence, our framework applies to a broader class of multi-modal distributions of practical interest. 
    Moreover, to the best of our knowledge, even in the single low-dimensional setting, our result is the first to achieve a (near-)optimal rate under only a subgaussian assumption on the target distribution, without extra assumptions on the score or density.
\end{itemize}
\begin{remark}
We believe that the linear dependence on $d$ in the prefactor of \eqref{eq:W_1 bound} is likely a proof artifact and may be improved through a sharper analysis. Determining whether this dependence is intrinsic or can be removed is an interesting direction for future work.
\end{remark}

\begin{remark}
Prior theory \citep{cai2025minimaxoptimalityprobabilityflow} suggests that, once the score estimation error is controlled, the discretization error of practical diffusion samplers does not affect the final statistical rate of the sampling error. Accordingly, this work focuses on the main statistical bottleneck, namely score estimation, by analyzing the idealized continuous-time reverse process.
Meanwhile, establishing sharp Wasserstein discretization bounds under mild distributional conditions remains an important direction for future work.
\end{remark}

%% file: Proof_Overview.tex
\section{Analysis}
\label{sec:proof overview}
This section provides the proof sketches for Theorems \ref{thm:score estimation error}--\ref{thm:high prob TV}.

\paragraph{Proof sketch of Theorem \ref{thm:score estimation error}.}

In light of the expressions of the true score (\ref{eq:score decomp}) and the score estimator (\ref{eq:score estimator}), the $L^2$-error decomposes as:
\begin{align}
    &\mathbb{E}\big[\|\widehat{s}_t(Z_t)-s_t^\star(Z_t)\|_2^2\big] \notag \\
    &\,\,\, \lesssim \sum_{i=1}^M \int \bE\big[\big(w_t(i,x)-\widehat{w}_t(i,x)\big)^2 \|\widehat{s}_t(i,x) \|_2^2 \big] p_t(x) \!\diff x \notag\\
    &\quad + \sum_{i=1}^M \int w_t^2(i,x) \, \bE\big[\| s_t(i,x) - \widehat{s}_t(i,x) \|_2^2 \big] p_t(x) \! \diff x, \label{eq:22}
\end{align}
where the expectation in the first line is taken over both $Z_t\sim p_t$ and the i.i.d.\ samples $\{ X^{(i)}\}_{i=1}^N$, while the second and the third line only take expectation over the samples.

This decomposition suggests that we need to control the  mean squared error of both the weight estimator $\widehat{w}_t(i,x)$ and score estimator $\widehat{s}_t(i,x)$.

For the weight estimator, we recall the true weight $w_{t}(i,x) = q_t(i,x)/p_t(x)$ in (\ref{eq:defn w_t(i,x)}).
The mean squared error is controlled in the following lemma, with proof deferred to Appendix \ref{pf-of-lem:lem:MSE for weight}.
\begin{lemma}
    \label{lem:MSE for weight}
    For any $x \in \mathcal{G}_t(i)$, the weight estimator in (\ref{eq:estimate w_t(i,x)}) satisfies
    \begin{align*}
        &\mathbb{E}\big[\big(w_t(i,x) -\widehat{w}_t(i,x) \big)^2 \big] \\
        & \quad\lesssim  \frac{1}{p_t^2(x)} \frac{1}{t^{d/2}N} \Big(\sum_{i=1}^M e^{-\frac{1}{2t}\|x-\proj_i(x)\|_2^2} q_t(i,x) \Big),
    \end{align*}
    where the expectation is taken over the i.i.d.\ samples $\{ X^{(i)}\}_{i=1}^N$. 
\end{lemma}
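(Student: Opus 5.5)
The plan is a ratio-estimator argument. Fix $x\in\mathcal{G}_t(i)$, so that $\widehat{w}_t(i,x)=\widehat{q}_t(i,x)/\widehat{p}_t(x)$; this is the only place the restriction to $\mathcal{G}_t(i)$ is used. Under exact subspace recovery, $\ind\{c(X^{(j)})=i\}=\ind\{X^{(j)}\in V_i\}$ almost surely, using \eqref{eq:separation}. Hence $\widehat{q}_t(i,x)$ and $\widehat{p}_t(x)$ are empirical averages of i.i.d.\ terms with means exactly $q_t(i,x)$ and $p_t(x)$. Moreover $0\le \widehat{q}_t(i,x)\le \widehat{p}_t(x)$, so $\widehat{w}_t(i,x)\in[0,1]$. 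We adopt the convention $\widehat{w}_t=0$ when $\widehat{p}_t=0$; in that case $\widehat{q}_t=0$ as well.

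\emph{Step 1 (algebraic decomposition).} We use the identity
\begin{align*}
w_t(i,x)-\widehat{w}_t(i,x)=\frac{q_t(i,x)-\widehat{q}_t(i,x)}{p_t(x)}+\widehat{w}_t(i,x)\,\frac{\widehat{p}_t(x)-p_t(x)}{p_t(x)}.
\end{align*}
It holds whenever $\widehat{p}_t>0$, and trivially also when $\widehat{p}_t=\widehat{q}_t=0$. This identity avoids ever dividing by the random $\widehat{p}_t$. Since $|\widehat{w}_t|\le 1$, we obtain
\begin{align*}
\bE\big[(w_t(i,x)-\widehat{w}_t(i,x))^2\big]\le \frac{2}{p_t^2(x)}\Big(\var\big(\widehat{q}_t(i,x)\big)+\var\big(\widehat{p}_t(x)\big)\Big).
\end{align*}

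\emph{Step 2 (variance bounds).} Each variance is at most $N^{-1}$ times the second moment of a single summand. For the $q$-term this gives
\begin{align*}
\var(\widehat{q}_t(i,x))\le \frac1N\int_{V_i}\varphi_t^2(x-y;d)\,p_i^\star(\diff y).
\end{align*}
Next we use the pointwise identity $\varphi_t^2(z;d)=(2\pi t)^{-d/2}e^{-\|z\|^2/(2t)}\varphi_t(z;d)$. For $y\in V_i$, the Pythagorean inequality gives $\|x-y\|^2\ge \|x-\proj_i(x)\|^2$. Together these yield
\begin{align*}
\var(\widehat{q}_t(i,x))\le \frac{1}{(2\pi t)^{d/2}N}\,e^{-\frac{1}{2t}\|x-\proj_i(x)\|^2}\,q_t(i,x).
\end{align*}
For $\widehat{p}_t(x)$, we decompose $p^\star=\sum_{j}p_j^\star$ via Assumption~\ref{assume:multi-modal} and apply the same bound on each $V_j$. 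This gives
\begin{align*}
\var(\widehat{p}_t(x))\le \frac{1}{(2\pi t)^{d/2}N}\sum_{j=1}^M e^{-\frac{1}{2t}\|x-\proj_j(x)\|^2}q_t(j,x).
\end{align*}
The $q$-term bound is dominated by the $j=i$ summand of this sum. Combining with Step 1 yields the claimed bound.

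\emph{Main point of care.} There is no real obstacle. The only delicate point is handling the random denominator $\widehat{p}_t$, and the decomposition in Step 1 together with $\widehat{w}_t\in[0,1]$ settles it. The gain over a naive bound comes from the Gaussian-square trick in Step 2, which produces the factor $e^{-\|x-\proj_i(x)\|^2/(2t)}$ from the distance of $x$ to each subspace.
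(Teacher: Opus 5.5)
Your proposal is correct and follows the paper's proof essentially step for step. You use the same decomposition $w_t-\widehat w_t=(q_t-\widehat q_t)/p_t+\widehat w_t(\widehat p_t-p_t)/p_t$ with $\widehat w_t\le 1$, and then the same variance bounds $\var\le N^{-1}\bE[\varphi_t^2]$ combined with the orthogonal structure of each $V_j$. Your factorization $\varphi_t^2=(2\pi t)^{-d/2}e^{-\|z\|^2/(2t)}\varphi_t$ is only a slightly tidier form of the paper's step, which splits $\|x-y\|^2$ orthogonally and drops half of the in-subspace exponent.
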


\begin{remark}
    The above mean squared error bound depends on the point $x$, density $p_t(x)$ and the geometric structure. This enables us to obtain a tight $L^2$ error bound and further get rid of the dependence on ambient dimension $d$. 
\end{remark}

With Lemma \ref{lem:MSE for weight} in hand, we can now bound the first term in \eqref{eq:22} associated with weight estimation.
It is easy to see $p_t$ is $\sqrt{\sigma^2+t}$-subgaussian. Define
$$\mathcal{B}_t \defn \big\{ x \in \mathbb{R}^d: \| A_i^{\top} x\|_{2} \leq B_t, \forall i \in [M] \big\},$$
with $B_t \defn C_B \sqrt{k(\sigma^2+t)\log N}$ for some universal constant $C_B>0$. One can show that the estimation error within $\mathcal{B}_t$ dominates since $\mathcal{B}_t^c$ is a low probability region w.r.t. $p_t$.
This allows us to apply Lemma \ref{lem:MSE for weight} to
derive the following bound that only depends on $k_i$: 
\begin{align}
    & \int \mathbb{E} \big[  (w_t(i,x)-\widehat{w}_t(i,x))^2\big]p_t(x)\diff x \notag \\
    &\quad  \lesssim 
    \sum_{j=1}^M\int_{\mathcal{B}_t} \frac{1}{N t^{d/2}} e^{-\frac{1}{2t} \| x-A_jA_j^{\top}x\|_2^2}  \diff x \notag  \\
    & \quad = \wt O\bigg( 
    \sum_{j=1}^M \frac{(\sigma^2+t)^{k_j/2}}{Nt^{k_j/2}} \bigg). \label{temp1}
\end{align}

Regarding the second term in \eqref{eq:22} associated with score estimation, notice that $w_t(i,x) = q_t(i,x)/p_t(x) \leq 1$. Thus, it suffices to bound
    \begin{align*}
        &\int w_t^2(i,x) \bE\big[\| s_t(i,x) - \widehat{s}_t(i,x) \|_2^2 \big] p_t(x) \diff x \\
        & \quad \leq \int \bE\big[\| s_t(i,x) - \widehat{s}_t(i,x) \|_2^2 \big] q_t(i,x) \diff x.
    \end{align*}
To this end, let $N_i\defn \sum_{j=1}^N \ind_{\{c(X^{(j)})=i\}}$ denote the sample size that we use to estimate the score on subspace $V_i$. The following lemma provides a mean squared error bound for the score estimator $\widehat{s}_t(i,x)$ in (\ref{eq:s_hat_t(i,x) decomp}). The proof is deferred to Appendix \ref{pf-of-lem:lem:bound of s_t(i,x)}.
\begin{lemma}
    \label{lem:bound of s_t(i,x)}
    For any fixed $n_i$, the score estimator $\widehat{s}_t(i,x)$ in (\ref{eq:s_hat_t(i,x) decomp}) satisfies
    \begin{align}
        &\int_{\mathbb{R}^d} \mathbb{E}\big[\| \widehat{s}_t(i,x) - s_t(i,x)\|_2^2 \ind_{\{N_i \geq n_i \}}\big] q_t(i,x) \diff x \notag \\
        &\quad \leq C_{k_i} \frac{p_i^\star(V_i)}{n_i}\Big( \frac{1}{t}+\frac{\sigma^{k_i}}{t^{k_i/2+1}}\Big)\polylog N  \label{eq:bdd of s_t(i,x)}
    \end{align}
    for some constant $C_{k_i}>0$ only depending on $k_i$.
    In addition, for any $x$, the $\ell_2$-norm of $\widehat{s}_t(i,x)$ is bounded by
    \begin{align}\label{eq:bdd of norm of s_t(i,x)}
        \| \widehat{s}_t(i,x)\|_2 \lesssim \frac{\|x-\proj_i(x)\|_2}{t} + \sqrt{\frac{2}{t} \log N}.
    \end{align}
\end{lemma}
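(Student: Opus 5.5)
We prove the norm bound \eqref{eq:bdd of norm of s_t(i,x)} first, since it is immediate. By \eqref{eq:s_hat_t(i,x) decomp} and the triangle inequality,
\[
\|\widehat s_t(i,x)\|_2 \le \frac{\|x-\proj_i(x)\|_2}{t} + \|A_i \widehat s_t^\low(i,A_i^\top x)\|_2 .
\]
Because $A_i^\top A_i = I_{k_i}$, the second term equals $\|\widehat s_t^\low\|_2$. The clipping in \eqref{eq:low-dimensional score estimator} caps this at $R=\sqrt{2\log N/t}$.

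For \eqref{eq:bdd of s_t(i,x)}, the first step is to reduce everything to the $k_i$-dimensional problem. Subtracting \eqref{eq:s_t(i,x) decomp} from \eqref{eq:s_hat_t(i,x) decomp}, the normal parts cancel exactly, so
\[
\|\widehat s_t(i,x)-s_t(i,x)\|_2=\|\widehat s^\low_t(i,u)-s^\low_t(i,u)\|_2, \qquad u=A_i^\top x .
\]
Next we integrate out the normal direction. Write $x=A_iu+v$ with $v\perp V_i$. Since the kernel satisfies $\varphi_t(x-y;d)=\varphi_t(u-A_i^\top y;k_i)\varphi_t(v;d-k_i)$ for $y\in V_i$, we get
\[
q_t(i,x)=p^\star(V_i)\, g_t(i,u)\,\varphi_t(v;d-k_i),
\]
where $g_t(i,\cdot)=p_i^\low\ast\cN(0,tI_{k_i})$. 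The $v$-integral equals one, so the left side becomes
\[
p^\star(V_i)\int_{\bR^{k_i}}\bE\bigl[\|\widehat s^\low_t(i,u)-s^\low_t(i,u)\|^2\ind_{\{N_i\ge n_i\}}\bigr]\, g_t(i,u)\diff u .
\]
Under exact recovery, conditioning on $N_i=m$ makes the samples $\{A_i^\top X^{(j)}\}_{j\in\cC_i}$ i.i.d.\ from $p_i^\low$. So it suffices to prove the single-subspace bound
\[
\int\bE\|\widehat s^\low_t-s^\low_t\|^2 g_t\,\diff u\lesssim \frac{1}{m}\Big(\frac1t+\frac{\sigma^{k_i}}{t^{k_i/2+1}}\Big)\polylog N
\]
for every $m\ge n_i$; the right side is decreasing in $m$, which gives the claim.

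The single-subspace bound is a $k_i$-dimensional subgaussian KDE-score result in the style of \citet{zhang2024minimaxoptimalityscorebaseddiffusion,wibisono2024optimal}, with $N$ in the threshold and in $R$ but $m$ samples. I would prove it pointwise in $u$ and then integrate, splitting into cases by the density level.

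\textbf{High-density region}, $g_t(u)\gtrsim \eta=\log N/(N(2\pi t)^{k_i/2})$. Here Bernstein bounds on $\widehat g_t$ and $\nabla\widehat g_t$ give the estimates
\[
\bE|\widehat g-g|^2\lesssim \frac{g}{m t^{k_i/2}}, \qquad \bE\|\nabla\widehat g-\nabla g\|^2\lesssim \frac{g\log N}{m t^{k_i/2+1}} .
\]
The second uses the fact that $\|u-y\|^2/t$ is effectively $O(\log N)$ under the posterior. Linearizing the ratio and using $\|s^\low_t\|\lesssim\sqrt{\log N/t}$ (truncating where necessary), the pointwise MSE is at most $\log N/(m t^{k_i/2+1} g(u))$ times $g(u)$. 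After multiplying by $g$ and integrating over the effective support, this contributes $\polylog N\cdot \mathrm{vol}/(m t^{k_i/2+1})$. The effective support is the ball of radius $\sqrt{(\sigma^2+t)\log N}$, whose volume is $(\sigma^2+t)^{k_i/2}\polylog N$. This gives the $\frac{1}{m}(\sigma^{k_i}t^{-k_i/2-1}+t^{-1})$ terms.

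\textbf{Low-density region.} There the error is bounded by $(R+\|s^\low_t\|)^2\lesssim\log N/t$, and we show it is paid only with small probability. When $g\ll\eta$, the estimator is zero with high probability, so the error is $\|s^\low_t\|^2 g$; integrating $g$ over the set $\{g\lesssim\eta\}$ inside the effective ball gives $\eta\,\mathrm{vol}$, which is again of the same order. Outside the ball, subgaussian tails contribute $N^{-10}$. The misthreshold events are controlled by Bernstein's inequality with the $\log N$ factor built into $\eta$.

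The main obstacle is the score-level bound for $\|s^\low_t\|$ without any density lower bound. We need that, under $g_t$-weighting, $\|s^\low_t(u)\|^2\lesssim\log N/t$ except on a set of negligible mass. I would obtain this from the Tweedie representation $s=-(u-\bE[Y\mid u])/t$ together with a Gaussian tail bound for the posterior spread.
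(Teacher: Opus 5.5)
Your proposal is correct. The reduction is the paper's proof essentially verbatim: the normal parts cancel, $q_t(i,x)=p^\star(V_i)\,g_t(i,A_i^\top x)\,\varphi_t(x-\proj_i(x);d-k_i)$, Tonelli removes the normal directions, and conditioning on $N_i=m\ge n_i$ together with monotonicity in $m$ finishes. You diverge only on the $k_i$-dimensional bound. The paper simply invokes Lemma~\ref{lem:low-dim score estimator} (Proposition~1 and Lemma~8 of \citet{cai2025minimaxoptimalityprobabilityflow}) with $N_i$ in place of $N$. It does not discuss that the threshold and clipping radius in \eqref{eq:low-dimensional score estimator} are built from $N$, while only $N_i$ samples enter $\widehat g_t(i,\cdot)$. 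You re-derive that bound with the mismatch tracked, which gives a self-contained argument and closes a step the paper leaves implicit.

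Two adjustments when you write it out. First, put the linearization cut at $g\ge C\log N/(m t^{k_i/2})$, not at $\eta=\log N/(N(2\pi t)^{k_i/2})$. When $m\ll N$, Bernstein does not give $\widehat g\ge g/2$ with high probability for $g$ just above $\eta$, since $m t^{k_i/2} g$ can be $\ll 1$ there. The band $\eta\le g\lesssim \log N/(m t^{k_i/2})$ is nevertheless harmless:
\begin{itemize}
\item on it $\|s\|^2\le \frac{2}{t}\log\frac{N}{\log N}$, so clipping caps the error at $O(\log N/t)$;
\item its $g$-mass inside the effective ball is $\lesssim \frac{\log N}{m t^{k_i/2}}(\sigma^2+t)^{k_i/2}\polylog N$, which is the order of your target.
\end{itemize}

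Second, the step you call the main obstacle is short. By Tweedie and Jensen, with $W=U-Y\sim\cN(0,tI_{k_i})$ independent of $Y$,
\[
\|s(u)\|^2\le t^{-2}\bE\big[\|W\|^2\mid U=u\big]\le \tfrac{2}{t}\log\tfrac{1}{(2\pi t)^{k_i/2}g(u)}.
\]
The second inequality is Jensen for $\exp$ under the posterior. Integrated over any set $S$ this gives
\[
\int_S\|s\|^2 g\le t^{-2}\,\bE\big[\|W\|^2\ind\{U\in S\}\big]\lesssim \tfrac{\log N}{t}\,\bP(U\in S)+\tfrac{1}{t}N^{-10}.
\]
This single bound handles the region $\{g<\eta\}$ and the tails outside the effective ball, so you need no pointwise lower bound on the density.
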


Combining \eqref{temp1}--\eqref{eq:bdd of norm of s_t(i,x)} completes the proof of Theorem \ref{thm:score estimation error}.

\paragraph{Proof sketch of Theorem~\ref{thm:high prob TV}.}
As we will show in Lemma \ref{lem:subspace clustering} in Appendix~\ref{sec:pf-of-lemmas}, $n_0=C_{\mathsf{sc}}M^2k\log n$ samples suffice to recover the subspaces exactly with probability at least $1-Mn^{-10}$, for a sufficiently large constant $C_{\mathsf{sc}}>0$. Thus, for large $n$, the remaining sample size for score estimation satisfies $N=n-n_0 \geq n/2$.
Conditioned on the exact subspace recovery event, we apply the score estimation error bound in Theorem \ref{thm:score estimation error}. We then relate the 1-Wasserstein error between the target distribution $p^\star$ and the generated distribution $p_{\widehat{Y}_{T-\tau}}$ 
to the integral of the score estimation error over time via the following stability bound \citep{oko2023diffusionmodelsminimaxoptimal,azangulov2025convergencediffusionmodelsmanifold,tang24a}:
\begin{align}
    & \mathbb{E}\big[W_1(p^\star,p_{\widehat{Y}_{T-\tau}})\big] \notag\\
    & \,\,\,\lesssim \sqrt{d} \, \Big( \sqrt{\tau} + \delta + e^{-T}+  \notag\\ 
    & \quad \sum_{j=0}^{L-1} \sigma_{T_{j+1}} \sqrt{\log \frac1\delta \int_{T_j}^{T_{j+1}} \mathop{\bE}\big[ \| \widehat{s}_{X_t}(X)-s_{X_t}^\star(X)\|_2^2\big] \diff t}\Big) \label{eq:W1_convergence_rate} 
\end{align}
for $0<T_0=\tau < T_1< \cdots <T_L = T$ and any $\delta>0$. Here the expectation is taken over the randomness of samples and $X \sim p_{X_t}$.

By Theorem \ref{thm:score estimation error}, together with $N\geq n/2$ and the score relation in \eqref{eq:score between VE process and VP}, one can show that
\begin{align*}
    & \int_{\mathbb{R}^d}\mathbb{E}\big[ \| \widehat{s}_{X_t}(x)-s^\star_{X_t}(x)\|_2^2\big] p_{X_t}(x) \diff x \\
    &\quad =\wt O\bigg( \frac{dM^3}{n} \Big( \frac{1}{h(t)}+\frac{\sigma^{k\vee2}}{h(t)^{{(k\vee2)}/2+1}}\Big)\bigg).
\end{align*}
Observe that 
$
    h(t) = 2e^{2t} = 2/c_t^2,
$
and thus 
\begin{align*}
    & \int_{T_j}^{T_{j+1}} \int_{\mathbb{R}^d}\mathbb{E}\big[ \| \widehat{s}_{X_t}(x)-s^\star_{X_t}(x)\|_2^2\big] p_{X_t}(x) \diff x \diff t \\
    & = \wt O\bigg( \frac{dM^3}{n} \Big( \log \frac{h(T_{j+1})}{h(T_j)} + \frac{2\sigma^{k\vee2}}{k\vee2}\frac{1}{h(T_j)^{(k\vee2)/2}} \Big)\bigg). 
\end{align*}

We then choose a dyadic partition of the time interval by setting
\begin{align*}
    T_{j+1} = 2T_j, \quad T \asymp \log n, \quad \tau\asymp n^{-\gamma} 
\end{align*}
for some $\gamma>0$ that will be specified below. Plugging these into \eqref{eq:W1_convergence_rate} yields
\begin{align*}
    &\sum_{j=0}^{L-1} \sqrt{(1-e^{-4T_j})\int_{T_j}^{T_{j+1}} \mathbb{E}\big[ \| \widehat{s}_{X_t}(X)-s^\star_{X_t}(X)\|_2^2\big] \diff t} \\
    &\lesssim \sqrt{d} \, M^{3/2} \,\polylog n 
    \begin{cases}
        \frac{1}{\sqrt{n}}\tau^{-\frac{k}{4}+\frac{1}{2}}, \quad &k\geq 2 \\
        \frac{1}{\sqrt{n}}, & k=1
    \end{cases}
\end{align*}

Finally, taking $\delta = n^{-1}$ and $\tau = n^{-2/k}$, we conclude
\begin{align*}
    \bE\big[W_1(p^\star,p_{\widehat{Y}_{T-\tau}})\big] \lesssim \frac{dM^{3/2}}{n^{1/(k\vee 2)}} \polylog n.
\end{align*}

%% file: Numerical_results.tex
\section{Numerical results}
\label{sec:numerical results}
In this section, we provide numerical experiments to validate the theoretical findings of our paper.
Since evaluating the Wasserstein distance is computationally prohibitive in high dimensions, we focus on the $L^2$-score estimation guarantee in Theorem \ref{thm:score estimation error}.

We consider a synthetic target distribution in $\mathbb{R}^d$ with $d=48$. The support is a union of $M=128$ randomly generated linear subspaces, each with intrinsic dimension $k=3$.
The restriction of the distribution to each subspace is chosen to be a two-component Gaussian mixture with randomized parameters. 
We construct the kernel-based score estimator using $N=50,000$ i.i.d samples from the target distribution. 
For each time value $t$, we approximate the $L^2$-score estimation error by Monte Carlo using $10{,}000$ independent samples from $p_t$ and average the result over $20$ independent training datasets generated from the same target distribution.

Figure~\ref{fig:numerical results} plots the empirical $L^2$ score estimation error versus the diffusion time $t$. The observed scaling is consistent with the prediction of Theorem~\ref{thm:score estimation error}, where the score estimation error is governed by the intrinsic dimension of the data, rather than the ambient dimension. In particular, despite the relatively large ambient dimension $d=48$, the empirical error exhibits a substantially milder dependence on $t$ than would be suggested by ambient-dimensional worst-case bounds.

\begin{figure}[t]
    \begin{center}
    \includegraphics[width=0.9\columnwidth]{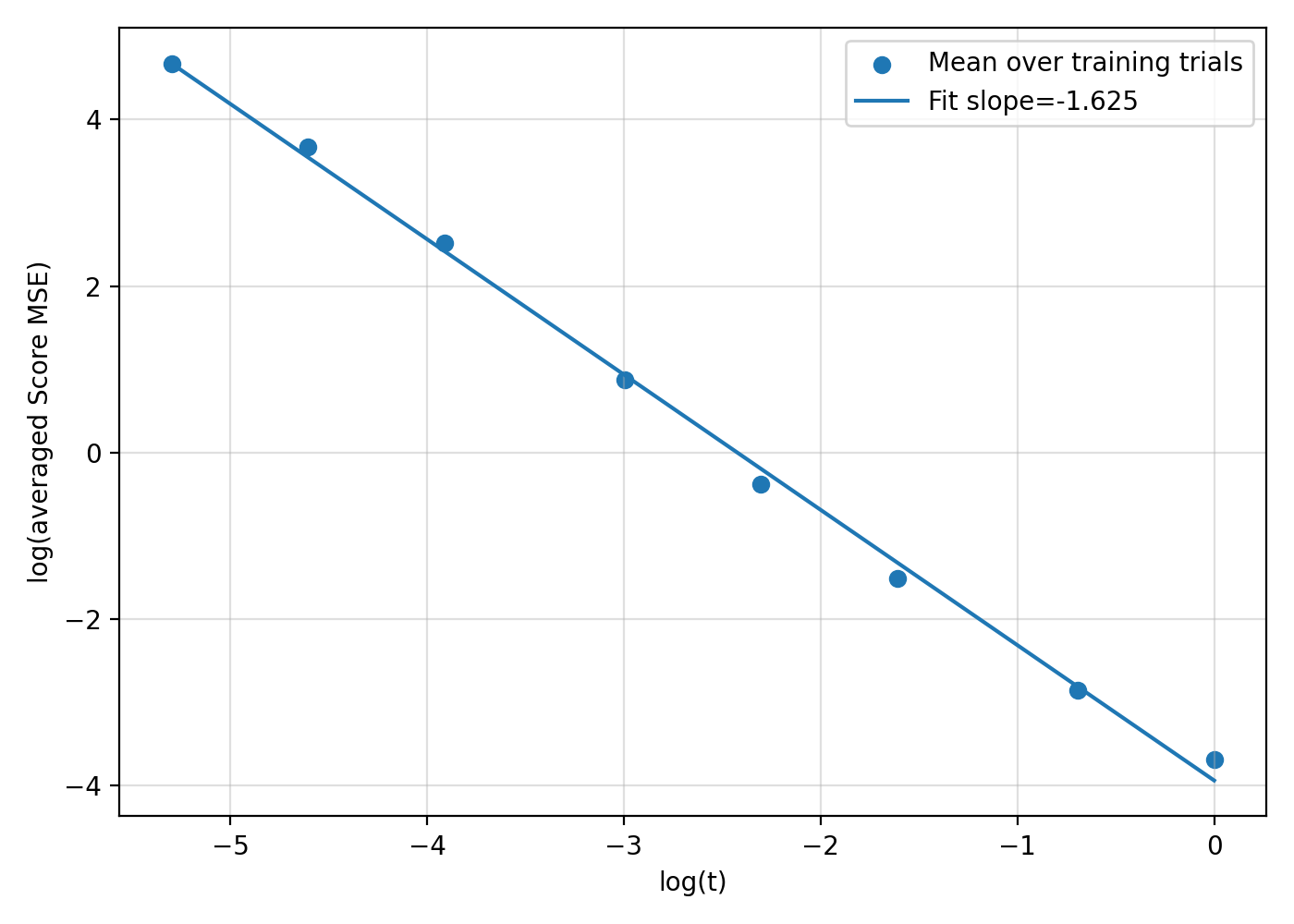}
    \caption{
       Empirical $L^2$-score error versus diffusion time $t$.\label{fig:numerical results}
    }
    \end{center}
\end{figure}

%% file: discussions.tex
\section{Discussion}
\label{sec:discussion}

This paper has studied the sample complexity of diffusion models for learning distributions supported on a union of low-dimensional subspaces, a tractable model for low-dimensional, multi-modal data commonly observed in practice. 
We construct a kernel-based score estimator and prove that diffusion-based sampling can learn the target distribution using at most $\widetilde{O}(\veps^{-(k\vee2)})$ samples, where $k$ represents the intrinsic dimension.
Our result shows that diffusion models can achieve statistical optimality by exploiting intrinsic low-dimensional structure while naturally accommodating multi-modal data.

Building on the results of our paper, several directions remain open for future work. 
First, it would be valuable to extend our theory to practical NN-based score estimators. A natural approach is to analyze an ERM estimator over a NN class, which requires controlling both approximation and generalization errors. 
The main challenge lies in the approximation step: constructing a NN approximation of the target score whose complexity depends on the intrinsic dimension. 
Our score decomposition, together with the kernel-based construction, makes the relevant low-dimensional approximation targets explicit. Thus, our construction provides a concrete roadmap for future analysis of NN score estimators.
Second, it would be important to extend our framework to real data with more complex geometric structures, such as classes residing on manifolds of varying dimensions. 
A natural first step is to consider distributions concentrated near a union of low-dimensional subspaces. In this setting, once the underlying subspaces are learned from noisy observations, the score still admits an analogous normal-tangent decomposition, where the tangent component is governed by a low-dimensional score and the normal component remains Gaussian with an enlarged variance. This suggests that the framework developed here could be extended to noisy low-dimensional models.
Third, it would be interesting to develop a fully end-to-end convergence analysis that explicitly accounts for both score estimation error and discretization of the reverse-time dynamics (SDE/ODE), ideally yielding non-asymptotic bounds in Wasserstein distance under mild distributional conditions.

%% file: pf-of-theorems.tex
\section{Proof of theorems}

\subsection{Proof of Theorem \ref{thm:score estimation error}}
\label{pf-of-thm:thm:score estimation error}
In this section, we consider the score estimation error under exact subspace recovery. 
We first define the following event set $\mathcal{A}$ as
\begin{align*}
    \mathcal{A} \defn \Big\{ N_i \geq \frac{N}{2c_{p}M}, \quad \forall i \in [M] \Big\}.
\end{align*}
Here $N_i\defn \sum_{j=1}^N \ind_{\{ c(X^{(j)}) = i\}}$ is a random number of samples on $V_i$. 
The following claim tells us that, event $\mathcal{A}$ happens with high probability.
\begin{claim}
    \label{claim:equal separation}
    Under Assumption \ref{assume:multi-modal}, it holds that,
    \begin{align*}
       \bP[\mathcal{A}^c] \leq Me^{-\frac{N}{2c_{p}^2M^2}}.
    \end{align*}
\end{claim}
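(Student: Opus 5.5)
Under exact subspace recovery, $c(X^{(j)})=i$ holds exactly when $X^{(j)}\in V_i$. For each fixed $i\in[M]$, the count $N_i=\sum_{j=1}^N \ind_{\{X^{(j)}\in V_i\}}$ is therefore a sum of $N$ i.i.d.\ Bernoulli variables with success probability $\pi_i\defn p^\star(V_i)$. Assumption~\ref{assume:multi-modal} gives $\pi_i\ge 1/(c_pM)$, so $\bE[N_i]=N\pi_i\ge N/(c_pM)$. The plan is to show that each $N_i$ falls below half of this lower bound on its mean only with exponentially small probability, and then to take a union bound over $i\in[M]$.

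Fix $i$ and apply Hoeffding's inequality to the bounded variables $\ind_{\{X^{(j)}\in V_i\}}\in[0,1]$:
\begin{align*}
\bP\Big[N_i < \frac{N}{2c_pM}\Big] \le \bP\Big[N_i - N\pi_i \le -\frac{N}{2c_pM}\Big] \le \exp\Big(-\frac{2N^2/(4c_p^2M^2)}{N}\Big) = e^{-\frac{N}{2c_p^2M^2}}.
\end{align*}
The first inequality holds because $N\pi_i - N/(2c_pM)\ge N/(2c_pM)$. The deviation we need is $s=N/(2c_pM)$, and Hoeffding gives $\exp(-2s^2/N)=\exp(-N/(2c_p^2M^2))$, which matches the claimed exponent exactly.

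Since $\mathcal{A}^c=\bigcup_{i=1}^M\{N_i<N/(2c_pM)\}$, the union bound yields $\bP[\mathcal{A}^c]\le Me^{-N/(2c_p^2M^2)}$. Identifying $N_i$ with a binomial count depends only on the exact-recovery event, which is how the event is used in Theorem~\ref{thm:score estimation error}. There is no real obstacle here. A multiplicative Chernoff bound would give a better exponent, of order $N/(c_pM)$, but plain Hoeffding already produces the stated constant.
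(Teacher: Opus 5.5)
Your proof is correct and matches the paper's argument: Hoeffding's inequality applied to the Bernoulli count $N_i$ with deviation $N/(2c_pM)$, using $p^\star(V_i)\ge 1/(c_pM)$, and then a union bound over $i\in[M]$. You also make explicit that identifying $\{c(X^{(j)})=i\}$ with $\{X^{(j)}\in V_i\}$ relies on exact recovery, and implicitly on sample splitting keeping the $N$ samples i.i.d.; the paper uses this the same way without stating it.
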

The proof follows from concentration inequality and can be found in Appendix \ref{pf-of-claim:claim:equal separation}.

We first consider the error on event $\mathcal{A}$, where we have enough samples on each subspace. Notice the score decomposition (\ref{eq:score decomp}) and the score estimator in (\ref{eq:score estimator}), the $L^2$ estimation error can be written as
\begin{align*}
    \int \mathbb{E}\Big[\big\| s_t^\star(x)-\widehat{s}_t(x)\big\|_2^2 \ind_{\mathcal{A}}\Big] p_t(x) \!\diff x &=  \int \mathbb{E}[\| \sum_{i=1}^M \big(w_t(i,x)s_t(i,x)- \widehat{w}_t(i,x)\widehat{s}_t(i,x)\big)\|_2^2 \ind_{\mathcal{A}}] p_t(x) \!\diff x \\
    & \leq M \sum_{i=1}^M \int \mathbb{E}\Big[ \big\| w_t(i,x)s_t(i,x)-\widehat{w}_t(i,x)\widehat{s}_t(i,x) \big\|_2^2 \ind_{\mathcal{A}}\Big] p_t(x) \!\diff x  \quad (\text{C-S Ineq})\\
    &\defnrev M \sum_{i=1}^M L_i .
\end{align*}
We further decompose $L_i$ using the error of weight estimator and the error of score estimator respectively, 
\begin{align*}
    L_i &\defn \int \mathbb{E}\Big[ \big\| w_t(i,x)s_t(i,x)-\widehat{w}_t(i,x)\widehat{s}_t(i,x) \big\|_2^2 \ind_{\mathcal{A}}\Big] p_t(x) \!\diff x \\
    & \lesssim \int \mathbb{E} \Big[  \big(w_t(i,x)-\widehat{w}_t(i,x)\big)^2 \cdot \|\widehat{s}_t(i,x) \|_2^2 \ind_{\mathcal{A}}\Big] p_t(x) \!\diff x + \int \mathbb{E} \Big[  \big(w_t(i,x))^2 \cdot \| s_t(i,x) - \widehat{s}_t(i,x) \|_2^2 \ind_{\mathcal{A}}\Big] p_t(x) \!\diff x \\
    & \defnrev L_{i,1} + L_{i,2}.
\end{align*}

\paragraph*{Bound of $L_{i,1}$.}
We utilize a set $\mathcal{B}_t$ here and further decompose $L_{i,1}$ as
\begin{align*}
    L_{i,1} &\defn \int \mathbb{E} \Big[  \big(w_t(i,x)-\widehat{w}_t(i,x)\big)^2 \cdot \|\widehat{s}_t(i,x) \|_2^2 \ind_{\mathcal{A}}\Big] p_t(x) \! \diff x \\
    & \leq \int_{\mathcal{G}_{t}(i) \cap \mathcal{B}_t} \mathbb{E} \Big[  \big(w_t(i,x)-\widehat{w}_t(i,x)\big)^2 \cdot \|\widehat{s}_t(i,x) \|_2^2 \ind_{\mathcal{A}}\Big] p_t(x) \!\diff x  \\
    &+ \int_{\mathcal{G}_t(i)^c \cap \mathcal{B}_t} \mathbb{E} \Big[  \big(w_t(i,x)-\widehat{w}_t(i,x)\big)^2 \cdot \|\widehat{s}_t(i,x) \|_2^2 \ind_{\mathcal{A}}\Big] p_t(x) \!\diff x \\
    & + \int_{\mathcal{B}_t^c} \mathbb{E} \Big[  \big(w_t(i,x)-\widehat{w}_t(i,x)\big)^2 \cdot \|\widehat{s}_t(i,x) \|_2^2 \ind_{\mathcal{A}}\Big] p_t(x) \!\diff x \\
    &\defnrev \kappa_1 + \kappa_2 + \kappa_3,
\end{align*}
here we recall that $\mathcal{G}_t(i)$ is the regularization set defined in (\ref{eq:regularization set}) and $\mathcal{B}_t$ is defined as, 
\begin{align}
    \mathcal{B}_t \defn \Big\{ x \in \mathbb{R}^d: \| A_i^{\top} x\|_{2} \leq B_t, \forall i \in [M]\Big\}, \quad \text{with }B_t \defn C_B \sqrt{k(\sigma^2+t)\log (N)}. \label{eq:def of B_t}
\end{align}
for certain absolute constant $C_B>0$. 


The following claim tells us that the probability outside $\mathcal{B}_t$ is negligible and see Appendix \ref{pf-of-claim:claim:bdd of set B_t} for its proof. 
\begin{claim}
    \label{claim:bdd of set B_t}
    Under Assumption \ref{assump:sub-gaussian target}, for the set $\mathcal{B}_t$, it holds that, 
    \begin{align*}
        \int_{\mathcal{B}_t^c} p_t(x)\!\diff x &\lesssim \frac{Mk}{N^{4}}\\
        \int_{\mathcal{B}_t^c} \|x\|_2^2 \cdot p_t(x) \!\diff x &\lesssim \frac{d\sqrt{M}}{N^4} (\sigma^2+t).
    \end{align*}
\end{claim}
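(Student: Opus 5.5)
We bound both quantities through a union bound over the $M$ subspaces combined with subgaussian tails of the projections $A_i^\top Z_t$, where $Z_t \sim p_t$.

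\textbf{Subgaussian representation.} Write $Z_t = X_0 + \sqrt{t}\,W$ with $X_0\sim p^\star$ and $W\sim\cN(0,I_d)$ independent. Fix $i\in[M]$. Then
\begin{align*}
A_i^\top Z_t = A_i^\top X_0 + \sqrt{t}\,A_i^\top W, \qquad A_i^\top W\sim\cN(0,I_{k_i}).
\end{align*}
For the first term we use the mixture $p^\star=\sum_j p_j^\star$ and need $A_i^\top X_0$ to be subgaussian in each component. On component $j$ we have $X_0 = A_j Y$ with $Y\sim p_j^\low$, so for a unit $\theta\in\bR^{k_i}$,
\begin{align*}
\theta^\top A_i^\top A_j Y = (A_j^\top A_i\theta)^\top Y, \qquad \|A_j^\top A_i\theta\|\le 1.
\end{align*}
Hence this scalar is $\sigma_j$-subgaussian. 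Averaging over the mixture weights keeps the bound $\bE\exp\big((\theta^\top A_i^\top X_0/\sigma)^2\big)\le 2$. Adding the independent Gaussian part, $\theta^\top A_i^\top Z_t$ is $C\sqrt{\sigma^2+t}$-subgaussian.

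\textbf{Tail probability.} Take a $1/2$-net of the sphere in $\bR^{k_i}$, of size $5^{k_i}$, or equivalently bound coordinatewise. This gives
\begin{align*}
\bP\big(\|A_i^\top Z_t\|> s\big)\le 2k_i\exp\big(-c\,s^2/(k_i(\sigma^2+t))\big),
\end{align*}
where the coordinatewise version produces the factor $k$. Setting $s=B_t$ with $C_B$ large gives $\lesssim k N^{-4}$. A union bound over $i\in[M]$ yields the first claim, $Mk/N^4$.

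\textbf{Second moment.} Apply Cauchy--Schwarz:
\begin{align*}
\int_{\mathcal{B}_t^c}\|x\|^2p_t \le \big(\bE\|Z_t\|^4\big)^{1/2}\,\bP(Z_t\in\mathcal{B}_t^c)^{1/2}.
\end{align*}
It remains to bound $\bE\|Z_t\|^4\lesssim \bE\|X_0\|^4+t^2\bE\|W\|^4$. We have $t^2\bE\|W\|^4\lesssim t^2 d^2$, and $\|X_0\|=\|Y\|$ with $Y$ a $k_j$-dimensional $\sigma$-subgaussian vector, so $\bE\|X_0\|^4\lesssim k^2\sigma^4\le d^2\sigma^4$. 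Thus $(\bE\|Z_t\|^4)^{1/2}\lesssim d(\sigma^2+t)$.

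The probability factor gives only $\sqrt{Mk}\,N^{-2}$, so the tail must be sharper. We enlarge $C_B$ so that the per-subspace tail is $k N^{-8}$ instead of $kN^{-4}$. Then the square root is $\sqrt{Mk}\,N^{-4}$, and absorbing $\sqrt{k}$ yields $d\sqrt{M}(\sigma^2+t)/N^4$ up to constants.

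\textbf{Main obstacle.} The delicate step is the cross-subspace subgaussianity, namely that $A_i^\top X_0$ remains subgaussian when $X_0$ comes from a different subspace. This is handled by the contraction $\|A_j^\top A_i\theta\|\le1$. The remaining bookkeeping is routine; it only requires choosing $C_B$ large enough that the tail exponents clear the Cauchy--Schwarz square root.
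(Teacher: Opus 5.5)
Your proposal is correct and follows the paper's proof essentially step for step. Both arguments get subgaussianity of $p^\star$ (hence of $p_t$) from the contraction $\|A_j^\top A_i\theta\|_2\le 1$, use a coordinatewise tail bound plus a union bound over $i\in[M]$ for the first estimate, and apply Cauchy--Schwarz with $\mathbb{E}\|Z_t\|_2^4\lesssim d^2(\sigma^2+t)^2$ for the second. Your extra step of enlarging $C_B$ is actually needed to obtain the stated $N^{-4}$ in the second bound: the paper's own Cauchy--Schwarz computation only concludes $d\sqrt{M}(\sigma^2+t)/N^2$, which is all that is used later when bounding $\kappa_3$.
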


\begin{itemize}
    \item For $\kappa_1$, notice that for $x\in \mathcal{G}_t(i)$, 
    \begin{align*}
        \| \widehat{s}_t(i,x)\| _2 \leq \frac{R_t(i)}{t} + \sqrt{\frac{2}{t}\log N} 
    \end{align*}
    using Lemma \ref{lem:bound of s_t(i,x)}. Hence, 
    \begin{align*}
        \kappa_1 &\defn \int_{\mathcal{G}_{t}(i) \cap \mathcal{B}_t} \mathbb{E} \Big[  \big(w_t(i,x)-\widehat{w}_t(i,x)\big)^2 \cdot \|\widehat{s}_t(i,x) \|_2^2 \ind_{\mathcal{A}}\Big] p_t(x) \!\diff x \\
        & \lesssim \Big( \frac{R_t(i)^2}{t^2} + \frac{1}{t}\Big) \cdot \int_{\mathcal{G}_t(i) \cap \mathcal{B}_t}  \frac{1}{(2\pi t)^{d/2}N\cdot p_t(x)} \Big(\sum_{j=1}^M e^{-\frac{1}{2t}\|x-\proj_j(x)\|_2^2} \cdot q_t(j,x) \Big) \!\diff x \quad (\text{Lemma \ref{lem:MSE for weight}}) \\
        &\leq \Big( \frac{R_t(i)^2}{t^2} + \frac{1}{t}\Big) \cdot \sum_{j=1}^M \int_{\mathcal{B}_t} \frac{1}{N(2\pi t)^{d/2}} e^{-\frac{1}{2t} \| x-A_jA_j^{\top}x\|_2^2}  \!\diff x \\
        & \leq \Big( \frac{R_t(i)^2}{t^2} + \frac{1}{t}\Big) \cdot  \sum_{j=1}^M \frac{2^{k_j} B_t^{k_j}}{N(2\pi t)^{k_j/2}} \\
        &\lesssim \frac{d}{t}\cdot \frac{M}{N}\Big( 1+\frac{\sigma^k}{t^{k/2}}\Big) \cdot \big(\polylog N +\log t\big)
    \end{align*}
    In the third inequality, we apply the following changing variable technique in integration and then use Tonelli's Theorem, 
    \begin{align}
            z = T_jx \defn 
            \begin{pmatrix}
             A_j^{\top}x\\
            P_{V_j^{\perp}}^{\top} x
        \end{pmatrix} \label{eq:changing var}
        \end{align}
        here columns of $P_{V_j^{\perp}}$ denotes an orthogonal basis of $V_j^{\perp}$ and this is an orthogonal transform with
    \begin{align*}
        \|x-A_jA_j^{\top}x\|_2^2 = \|P_{V_j^{\perp}}^{\top} x\|_2^2. 
    \end{align*}

    \item For $\kappa_2$, we have, 
    \begin{align*}
        \kappa_2 &\defn \int_{\mathcal{G}_t(i)^c \cap \mathcal{B}_t} \mathbb{E} \Big[  \big(w_t(i,x)-\widehat{w}_t(i,x)\big)^2 \cdot \|\widehat{s}_t(i,x) \|_2^2 \ind_{\mathcal{A}}\Big] p_t(x) \!\diff x \\
        & = \int_{\mathcal{G}_t(i)^c \cap \mathcal{B}_t} \frac{q_t^2(i,x)}{p_t^2(x)} \mathbb{E} [\|\widehat{s}_t(i,x) \|_2^2] \cdot p_t(x)\!\diff x \\
        &\lesssim \int_{\mathcal{G}_t(i)^c \cap \mathcal{B}_t} \Big( \frac{1}{t} + \frac{\|x-A_iA_i^{\top}x\|_2^2}{t^2}\Big) \cdot q_t(i,x) \!\diff x \quad (\text{Lemma \ref{lem:bound of s_t(i,x)} + } q_t(i,x) \leq p_t(x)) 
    \end{align*}
    Since it holds that, 
    \begin{align*}
        q_t(i,x) \defn \int_{V_i} \varphi_t(x-y;d) p_i^\star(\!\diff y) \leq  p_i^\star(V_i)\cdot (2\pi t)^{-d/2} e^{-\frac{1}{2t}\|x-A_iA_i^{\top}x\|_2^2},
    \end{align*}
    we apply the similar linear transform as (\ref{eq:changing var}), 
    \begin{align*}
        \kappa_2 &\lesssim p_i^\star(V_i)\int_{\mathbb{R}^d} \ind_{ \|z\|\leq B_t} \cdot \ind_{\| z_{k_i+1:d}\|_2\geq R_t(i)} \Big( \frac{1}{t} + \frac{\| z_{k_i+1:d}\|_2^2}{t^2}\Big) \cdot (2\pi t)^{-d/2}e^{-\frac{1}{2t} \|z_{k_i+1:d}\|_2^2} \!\diff z \\
        &\lesssim p_i^\star(V_i) (2\pi t)^{-k_i/2} \int_{\mathbb{R}^{k_i}} \ind_{\| z_{1:k_i}\|\leq B_t} \Big( \frac{1}{t} + \frac{d\cdot R_t(i)^2 + d^2tC_1}{t^2}\Big) \cdot \exp \Big( -\frac{R_t(i)^2}{d\cdot tC_1}\Big)  \!\diff z_{1:k_i} \quad (\text{Lemma \ref{lem:tail bdd for subgaussian}})\\
        &\lesssim \frac{2^{k_i} B_t^{k_i}}{(2\pi t)^{k_i/2}} p_i^\star(V_i)\cdot \Big( \frac{d^2}{t} + \frac{d\cdot R_t(i)^2}{t^2}\Big)  \exp \Big( -\frac{R_t(i)^2}{d\cdot tC_1}\Big)  \\
        &\lesssim \frac{d}{Nt}\Big( 1+\frac{\sigma^{k_i}}{t^{k_i/2}}\Big)\cdot \polylog N \lesssim \frac{d}{Nt}\Big( 1+\frac{\sigma^{k}}{t^{k/2}}\Big)\cdot \polylog N. 
    \end{align*}
    Here $C_1>0$ is a universal constant which is related with the sub-gaussian norm of standard Gaussian distribution. 

    \item For $\kappa_3$, 
    \begin{align*}
        \kappa_3 &\defn \int_{\mathcal{B}_t^c} \mathbb{E} \Big[  \big(w_t(i,x)-\widehat{w}_t(i,x)\big)^2 \cdot \|\widehat{s}_t(i,x) \|_2^2 \ind_{\mathcal{A}}\Big] p_t(x) \!\diff x \\
        &\leq \int_{\mathcal{B}_t^c} \mathbb{E} \Big[   \|\widehat{s}_t(i,x) \|_2^2\Big] p_t(x) \!\diff x \\
        & \lesssim \int_{\mathcal{B}_t^c} \Big( \frac{\| x-A_iA_i^{\top}x\|_2^2}{t^2} + \frac{1}{t}\Big) \cdot p_t(x)\!\diff x \quad (\text{Lemma \ref{lem:bound of s_t(i,x)}}) \\
        &\leq \int_{\mathcal{B}_t^c} \Big( \frac{\|x\|_2^2}{t^2} + \frac{1}{t} \Big) p_t(x)\!\diff x \\
        &\lesssim \frac{dM(\sigma^2+t)}{t^2N^2} \polylog N  \quad (\text{Claim \ref{claim:bdd of set B_t}}) \\
        &\lesssim \frac{d}{tN} \Big( 1+\frac{\sigma^{k\vee2}}{t^{{(k\vee2)}/2}}\Big)\polylog N .
    \end{align*}

     \item For $L_{i,1}$, we could sum them up, 
    \begin{align*}
        L_{i,1} &\leq \kappa_1 + \kappa_2 + \kappa_3\\ 
        &\lesssim \frac{dM}{Nt} \Big( 1+\frac{\sigma^{k\vee2}}{t^{{(k\vee2)}/2}}\Big) \cdot \big(\polylog N+\log t\big) .
    \end{align*}
\end{itemize}

\paragraph*{Bound of $L_{i,2}$.}
\begin{align*}
    L_{i,2} &\defn \int \mathbb{E} \Big[  \big(w_t(i,x))^2 \cdot \| s_t(i,x) - \widehat{s}_t(i,x) \|_2^2 \cdot \ind_{\mathcal{A}}\Big] p_t(x) \!\diff x \\
    &= \int \frac{q_t^2(i,x)}{p_t^2(x)}  \mathbb{E} \Big[ \big\| s_t(i,x) - \widehat{s}_t(i,x) \big\|_2^2\cdot \ind_{\mathcal{A}} \Big] p_t(x) \!\diff x \\
    & \leq \int  \bE \Big[ \big\| s_t(i,x) - \widehat{s}_t(i,x)\big \|_2^2 \cdot \ind_{\{ N_i\geq N/2c_{p}M\}} \Big] q_t(i,x) \! \diff x \quad (\text{since }q_t(i,x) \leq p_t(x)) \\
    &\lesssim p^\star(V_i) \frac{c_{p}M(4/\sqrt{\pi})^{k_i} }{N} \Big( \frac{1}{t} + \frac{\sigma^{k_i}}{t^{k_i/2+1}}\Big) \polylog N  \quad (\text{Lemma \ref{lem:bound of s_t(i,x)}}).
\end{align*}

\paragraph*{Bound of error on $\mathcal{A}$.}
Therefore, 
\begin{align*}
     \int \mathbb{E}[\| s_t^\star(x)-\widehat{s}_t(x)\|_2^2\ind_{\mathcal{A}}] p_t(x)\!\diff x &\leq M\sum_{i=1}^M (L_{i,1}+L_{i,2}) \\
     &\lesssim \frac{dM^3}{Nt} \Big( 1+\frac{\sigma^{k\vee2}}{t^{{(k\vee2)}/2}}\Big) \cdot \big(\polylog N +\log t\big).
\end{align*}
Here for simplicity, we omit constant terms $c_{p}$ and terms that are only related with intrinsic dimension $k$.

\paragraph*{Bound of error outside $\mathcal{A}^c$.}
\begin{align*}
    \int \mathbb{E}[\| s_t^\star(x)-\widehat{s}_t(x)\|_2^2\ind_{\mathcal{A}^c}] p_t(x)\!\diff x &\lesssim  \Big(\int \|s_t(x)\|_2^2p_t(x) \!\diff x\Big) \cdot \mathbb{P}[\mathcal{A}^c] \\
    &+ \int \Big( \frac{1}{t} + \frac{\|x\|_2^2}{t^2} \Big) p_t(x)\! \diff x \cdot \mathbb{P}[\mathcal{A}^c] \quad (\text{Lemma \ref{lem:bound of s_t(i,x)}})\\
    &\lesssim \Big(\frac{d}{t} + \frac{d(\sigma^2+t)}{t^2} \Big) \cdot 2M e^{-\frac{N}{2c_{p}^2M^2}} 
\end{align*}
Here we apply Lemma 11 in \cite{cai2025minimaxoptimalityprobabilityflow} and that $p^\star \ast \mathcal{N}(0,tI_d)$ is $c\sqrt{\sigma^2+t}$ subgaussian r.v as we have proven in Appendix \ref{pf-of-claim:claim:bdd of set B_t}. 
Notice that, for large enough $N$, $e^{-\frac{N}{2c_{p}^2M^2}} \leq \frac{M^2c_{p}^2}{N}$ and thus, 
\begin{align*}
    \int \mathbb{E}[\| s_t^\star(x)-\widehat{s}_t(x)\|_2^2 \ind_{\mathcal{A}^c}] p_t(x)\!\diff x \lesssim \frac{dM^3}{Nt}\Big( 1+\frac{\sigma^2}{t}\Big).
\end{align*}

\paragraph*{Summary.}
In summary, our analysis above shows that, 
\begin{align*}
    \int \mathbb{E}\big[\|\widehat{s}_t(x)-s_t^\star(x)\|_2^2\big] p_t(x) \!\diff x &= \int \mathbb{E}[\| s_t^\star(x)-\widehat{s}_t(x)\|_2^2 \ind_{\mathcal{A}}] p_t(x) \!\diff x + \int \mathbb{E}[\| s_t^\star(x)-\widehat{s}_t(x)\|_2^2 \ind_{\mathcal{A}^c}] p_t(x) \!\diff x \\
    &\lesssim \frac{dM^3}{Nt}\Big( 1+\frac{\sigma^{k\vee2}}{t^{{(k\vee2)}/2}}\Big)  \cdot \big(\polylog N+\log t\big) .
\end{align*}
This proves Theorem \ref{thm:score estimation error}.

\subsection{Proof of Theorem \ref{thm:high prob TV}}
\label{pf-of-thm:thm:high prob TV}
\paragraph*{Proof of (\ref{eq:score between VE process and VP}).}
Recall the forward process (\ref{eq:forward conditional dist}), it holds that, 
\begin{align*}
    p_{X_t}(x) &= \int p_{X_t|X_0}(x|y) p^\star(\! \diff y) \\
    &= \int \big(2\pi\sigma_t^2\big) ^{-d/2} e^{-\frac{1}{2\sigma_t^2}\|x-c_ty\|_2^2}p^\star( \! \diff y)
\end{align*}
Hence, 
\begin{align*}
    \nabla_x p_{X_t}(x)&= (2\pi\sigma_t^2\big) ^{-d/2}\cdot  \int \Big( -\frac{x-c_ty}{\sigma_t^2}\Big)e^{-\frac{1}{2\sigma_t^2}\|x-c_ty\|_2^2}p^\star(\! \diff y)
\end{align*}
Therefore, its score function, 
\begin{align*}
    s^\star_{X_t}(x)&= \frac{\nabla_xp_{X_t}(x)}{p_{X_t}(x)} \\
    &= \frac{\int \Big( -\frac{x-c_ty}{\sigma_t^2}\Big)e^{-\frac{1}{2\sigma_t^2}\|x-c_ty\|_2^2}p^\star(\! \diff y) }{\int  e^{-\frac{1}{2\sigma_t^2}\|x-c_ty\|_2^2}p^\star(\! \diff y)}
\end{align*}
Similarly, for any $t>0$ and the VE process (\ref{eq:VE process}), it holds that, 
\begin{align*}
    s^\star_{Z_t}(x) = \frac{\int \Big( -\frac{x-y}{t}\Big)e^{-\frac{1}{2t}\|x-y\|_2^2}p^\star(\! \diff y) }{\int  e^{-\frac{1}{2t}\|x-y\|_2^2}p^\star(\! \diff y)}
\end{align*}

Therefore, 
\begin{align*}
    s^\star_{X_t}(c_tx) &=  \frac{\int \Big( -\frac{c_t(x-y)}{\sigma_t^2}\Big)e^{-\frac{c_t^2}{2\sigma_t^2}\|x-y\|_2^2}p^\star(\! \diff y) }{\int  e^{-\frac{c_t^2}{2\sigma_t^2}\|x-y\|_2^2}p^\star(\! \diff y)}\\
    &= \frac{1}{c_t}s^\star_{Z_{h(t)}}(x) \\
    &=: \frac{1}{c_t}s^\star_{h(t)}(x)
\end{align*}
Here $h(t)\defn \frac{\sigma_t^2}{c_t^2} = \frac{1-e^{-2t}}{e^{-2t}} = e^{2t}-1$. 

\paragraph*{Proof of Theorem \ref{thm:high prob TV}.}
We denote $\mathcal{E}$ as the event set of exact subspace recovery using $n_0$ samples. From Lemma \ref{lem:subspace clustering}, we could take $n_0 = O(M^2k\log n)$ and thus $\bP [\mathcal{E}^c] \lesssim Mn^{-10}$. 
Hence, for large $n$ such that $n_0 \leq 0.5 n$, we have the sample size for score estimation: $N \geq 0.5n$.
Therefore, for the score estimation error of VE process, we have, 
\begin{align}
    \bE \big[ \| \widehat{s}_{t}(X)-s_{t}^\star(X)\|_2^2\big] &= \bE \big[ \| \widehat{s}_{t}(X)-s_{t}^\star(X)\|_2^2 \cdot \ind_{\mathcal{E}} \big]  + \bE \big[ \| \widehat{s}_{t}(X)-s_{t}^\star(X)\|_2^2 \cdot \ind_{\mathcal{E}^c} \big] \notag \\
    & \lesssim \frac{dM^3}{nt} \Big( 1+\frac{\sigma^{k\vee2}}{t^{{(k\vee2)}/2}}\Big) \cdot \big(\polylog n +\log t\big) +\frac{d\log n}{t} \cdot Mn^{-10} + \sqrt{\bE[\| s_t^\star(X)\|_2^4] \cdot Mn^{-10}} \notag\\
    &\lesssim \frac{dM^3}{nt} \Big( 1+\frac{\sigma^{k\vee2}}{t^{{(k\vee2)}/2}}\Big) \cdot \big(\polylog n +\log t\big) \label{eq:score estimation error for VE}
\end{align} 
In the first inequality, we apply Theorem \ref{thm:score estimation error} and that $\|\widehat{s}_t(x)\|^2 \lesssim \frac{R_t^2(i)}{t^2} + \frac{\log n}{t} \lesssim \frac{d\log n}{t}$; in the last inequality, we apply Lemma 11 in \cite{cai2025minimaxoptimalityprobabilityflow} for the moment bound of true scores.

Now we consider the sampling error results from score estimation error. Notice that 1-Wasserstein distance between target distribution and generated distribution $\widehat{Y}_{T-\tau}$ using Algorithm \ref{alg:reverse SDE sampling} has the following control, 
\begin{align}
    \mathbb{E}\big[W_1(p^\star,p_{\widehat{Y}_{T-\tau}})\big]\lesssim \sqrt{d} \Big( \sqrt{\tau} + \sum_{j=0}^{L-1} \sqrt{\log \delta^{-1}\cdot \sigma_{T_{j+1}}^2 \int_{T_j}^{T_{j+1}} \int_{\mathbb{R}^d}\mathbb{E}\big[ \| \widehat{s}_{X_t}(x)-s_{X_t}^\star(x)\|_2^2\big] p_{X_t}(x) \diff x \diff t} + \delta + e^{-T} \Big) \label{eq:W_1 convergence rate} 
\end{align}
for certain $0<T_0=\tau < T_1< \cdots <T_L = T$ and any $\delta>0$. 
This bound can be found as (8) in \citet{azangulov2025convergencediffusionmodelsmanifold}, which cites Lemma D.7 in \citet{oko2023diffusionmodelsminimaxoptimal} as the proof, and also can be found as Lemma B.2 in \citet{tang24a}. Furthermore, the score estimator we define in (\ref{eq:score estimator}) satisfies, 
\begin{align*}
    \widehat{w}_t(i,x) \neq 0 \Longrightarrow x \in \mathcal{G}_t(i) \Longrightarrow \|\widehat{s}_t(i,x)\| \lesssim \sqrt{\frac{\log n}{t}} 
\end{align*}
and hence, $\|\widehat{s}_t(x)\|\lesssim \sqrt{\frac{\log n}{t}}$, with, 
\begin{align*}
    \widehat{s}_{X_t}(x) = \frac{1}{c_t } \widehat{s}_{h(t)}(\frac{x}{c_t}) \lesssim \frac{1}{c_t} \sqrt{\frac{\log n}{\sigma_t^2/c_t^2}} = \sqrt{\frac{\log n}{\sigma_t^2}}.
\end{align*}
Therefore, the condition for (\ref{eq:W_1 convergence rate}) is satisfied in our setting. We first derive a bound for $ \int_{\mathbb{R}^d}\mathbb{E}\big[ \| \widehat{s}_{X_t}(x)-s^\star_{X_t}(x)\|_2^2\big] p_{X_t}(x)\diff x $ using the result in Theorem \ref{thm:score estimation error}, 

\begin{align*}
    \int_{\mathbb{R}^d}\mathbb{E}\big[ \| \widehat{s}_{X_t}(x)-s^\star_{X_t}(x)\|_2^2\big] p_{X_t}(x)\diff x &= \frac{1}{c_t^2}\int_{\mathbb{R}^d}\mathbb{E}\big[ \| \widehat{s}_{h(t)}(\frac{x}{c_t})-s^\star_{h(t)}(\frac{x}{c_t})\|_2^2\big] p_{X_t}(x)\diff x \quad (\text{Equation (\ref{eq:score between VE process and VP})})\\ 
    &= \int \frac{1}{c_t^2}(c_t^2)^{d/2}\mathbb{E} \big[\| \widehat{s}_{h(t)}(y)-s^\star_{h(t)}(y)\|_2^2\big] p_{X_t}(c_t y)\diff y \\
    &= \frac{1}{c_t^2} \int \mathbb{E} \big[\| \widehat{s}_{h(t)}(y)-s^\star_{h(t)}(y)\|_2^2\big] p_{Z_{h(t)}}(y)\diff y \\
    &\lesssim \frac{1}{c_t^2} \frac{dM^3}{n} \Big( \frac{1}{h(t)}+\frac{\sigma^{k\vee2}}{h(t)^{{(k\vee2)}/2+1}}\Big) \big(\polylog n+\log h(t)\big)\quad (\text{Equation (\ref{eq:score estimation error for VE})}).
\end{align*}
Observe that, 
\begin{align*}
    h'(t) = 2e^{2t} = 2/c_t^2
\end{align*}
and thus, 
\begin{align*}
    \int_{T_j}^{T_{j+1}} \int_{\mathbb{R}^d}\mathbb{E}\big[ \| \widehat{s}_{X_t}(x)-s^\star_{X_t}(x)\|_2^2\big] p_{X_t}(x)\diff x \diff t &\lesssim \frac{dM^3}{n}\Big(\int_{T_j}^{T_{j+1}} \Big( \frac{1}{h(t)}+\frac{\sigma^{k\vee2}}{h(t)^{(k\vee2)/2+1}} \Big) h'(t) \diff t\Big) \big(\polylog n+\log h(t)\big)\\
    &\lesssim \frac{dM^3}{n} \Big( \log \frac{h(T_{j+1})}{h(T_j)} + \frac{2\sigma^{k\vee2}}{k\vee2}\frac{1}{h(T_j)^{(k\vee2)/2}} \Big) \big(\polylog n+T\big) .
\end{align*}

We then take the synthesized discretization as, 
\begin{align*}
    T_{j+1} = 2T_j, \quad T \asymp \log n, \quad \tau\asymp n^{-\gamma} 
\end{align*}
for some $\gamma>0$ that will be determined later. Then we can easily check that $L\asymp \log n$ and,
\begin{align*}
    &\sum_{j=0}^{L-1} \sqrt{(1-e^{-4T_j})\cdot \int_{T_j}^{T_{j+1}} \int_{\mathbb{R}^d}\mathbb{E}\big[ \| \widehat{s}_{X_t}(x)-s^\star_{X_t}(x)\|_2^2\big] p_{X_t}(x) \diff x \diff t} \\
    &\lesssim \frac{\sqrt{d}M^{3/2} \polylog n}{\sqrt{n}}  \Big( \sum_{j=0}^{L-1} \sqrt{(1-e^{-4T_j})\frac{2\sigma^{k\vee2}}{k\vee2}\frac{1}{(e^{2T_k}-1)^{(k\vee2)/2}}} + \polylog n\Big)\\
    &\lesssim \frac{\sqrt{d}M^{3/2}\polylog n}{\sqrt{n}} \cdot \Big(\sum_{j=0}^{L-1}\sqrt{\frac{4\sigma^kT_j}{2^{k/2}T_j^{(k\vee2)/2}}} +1 \Big) \\
    &\lesssim 
    \begin{cases}
        \frac{\sqrt{d}M^{3/2}\polylog n}{\sqrt{n}}\tau^{-\frac{k}{4}+\frac{1}{2}}, \quad &k\geq 2 \\
        \frac{\sqrt{d}M^{3/2}\polylog n}{\sqrt{n}}, & k=1
    \end{cases}.
\end{align*}

Further take $\delta = n^{-1}$ and take $\tau = n^{-2/k}$, then it holds that, 
\begin{align*}
    W_1(p^\star,p_{\widehat{Y}_{T-\tau}}) \lesssim \frac{dM^{3/2} \polylog n}{n^{1/(k\vee 2)}}.
\end{align*}

%% file: pf-of-lemmas.tex
\section{Proof of Lemmas}
\label{sec:pf-of-lemmas}

As long as we could ensure that with high probability the data segmentation with $n_0$ points are correct and each class contains at least $k+1$ points, then we could recover the exact linear subspaces $V_j$ and $c_j(X) = \ind_{\{ X \in V_j\}}$ almost surely for $X \sim p^\star$. This gives the following Lemma \ref{lem:subspace clustering}. 
\begin{lemma}[Subspace Clustering]
    \label{lem:subspace clustering}
    Under Assumption \ref{assume:multi-modal}, there exists an algorithm that uses the upper bound of $M$ and $k$ and ensures exact recovery of linear subspaces and hence the function $c_j$ with high probability. That is, for any large $n$, define
    \begin{align*}
        \mathcal{E} \defn \Big\{ \forall i \in [M], \exists j_i \in [M] \text{ s.t. } V_{j_i}  = \widehat{V}_i \Big\}
    \end{align*}
    Here $\widehat{V}_i$ denotes the $i$-th estimated subspace of this algorithm $\mathsf{Alg}(n_0)$. Then 
    \begin{align*}
        \mathbb{P} \big[ \mathcal{E}^c\big] \lesssim Mn^{-10}
    \end{align*}
    with the randomness taken over samples $\{ X^{(i)}\}_{i=1}^{n_0}$, and here we take $n_0 = O(c_{p}^2M^2(k+1)\log n)$. 
\end{lemma}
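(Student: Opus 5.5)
Here is how I would prove Lemma~\ref{lem:subspace clustering}. The idea is to use a simple but exhaustive algorithm that exploits noiselessness. Exact linear structure then turns the problem into a combinatorial one, and the probabilistic part reduces to a coupon-collector-type bound in the spirit of Claim~\ref{claim:equal separation}.

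\textbf{The algorithm.} Given the $n_0$ samples and the known upper bounds $M$ and $k$, I build $\widehat V$'s greedily. I search for a subset of at most $k$ samples whose span $W$ contains at least $k+1$ of the samples (counting multiplicity in general position), choosing the span of minimal dimension first. I declare $W$ an estimated subspace, remove all samples lying in $W$, and repeat. Each candidate span can be tested by rank computations in polynomial time in $n_0$ for fixed $k$. Statistical efficiency is all that matters here, so computational cost is not a concern. Sparse subspace clustering would also do, but it needs extra affinity conditions.

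\textbf{Why this is exact almost surely.} Fix a true subspace $V_i$, and let $Z\sim p_i^\star/p^\star(V_i)$. Consider any proper subspace $U$ that is spanned by finitely many other samples and is not equal to $V_i$. I need the probability that $Z\in U\cap V_i$ to vanish, i.e.\ that a fresh sample from $V_i$ does not fall into a lower-dimensional subspace spanned by other samples.

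There is a subtlety: Assumption~\ref{assume:multi-modal} only gives $p^\star(V_i\cap V_j)=0$, not absolute continuity within $V_i$. An atom or a lower-dimensional concentration of $p_i^\star$ could therefore break genericity. I would handle this by redefining $V_i$ as the \emph{smallest} subspace carrying $p_i^\star$, together with a mild genericity condition. Concretely, I would require that $p_i^\star$ gives zero mass to every proper subspace of $V_i$ determined by the other samples. Under this condition, with probability one, any $k_i$ samples from $V_i$ span $V_i$. Any span that mixes samples from different $V_j$'s, or that comes from fewer than $k_i$ samples of $V_i$, contains no additional sample except on a null set. Hence the only spans that capture $\geq k+1$ samples are exactly the true $V_i$ (for $k_i$-subsets drawn from a single class). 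The greedy step therefore recovers $V_i$ and assigns every point correctly, and the classifier $c(x)=i$ iff $x\in\widehat V_i$ agrees with the truth $p^\star$-a.s. I expect this genericity argument, and pinning down the needed non-degeneracy, to be the main obstacle.

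\textbf{Probability bound.} Exact recovery holds once every class $i$ contains at least $k+1$ of the $n_0$ samples. Let $N_i^{(0)}\sim\mathrm{Bin}(n_0,p^\star(V_i))$, with $p^\star(V_i)\geq 1/(c_pM)$. By Hoeffding, exactly as in Claim~\ref{claim:equal separation},
\[
\bP\big[N_i^{(0)}<k+1\big]\le \exp\Big(-2n_0\big(\tfrac{1}{c_pM}-\tfrac{k+1}{n_0}\big)^2\Big).
\]
With $n_0=Cc_p^2M^2(k+1)\log n$ and $C$ large, this is at most $n^{-10}$. A union bound over $i\in[M]$ gives $\bP[\mathcal{E}^c]\lesssim Mn^{-10}$, as claimed.
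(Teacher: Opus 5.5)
This is essentially the paper's argument: an exhaustive search over small subsets of the $n_0$ samples for a low-dimensional span that captures additional samples, a null-set argument ruling out spurious spans, and Hoeffding plus a union bound (as in Claim~\ref{claim:equal separation}) guaranteeing at least $k+1$ samples on every $V_i$; your added genericity condition is warranted, because the paper uses it implicitly when it asserts $\mathbb{P}[X\in\mathrm{span}\{y_1,\dots,y_p\}]=0$ ``by Assumption~\ref{assume:multi-modal}'', and $p^\star(V_i\cap V_j)=0$ alone does not imply this (an atom inside some $V_j$ already breaks it). One correction: mixed spans can capture extra samples (when $k_j<k$, the span of $k_j$ samples of $V_j$ and one sample of another class contains all of $V_j$), so what genericity actually gives is that a span can capture a class-$j$ sample outside its spanning set only if it contains $V_j$, and it is your minimal-dimension rule that then forces the selected span to equal some true $V_j$.
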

We will further implement an algorithm that proves Lemma \ref{lem:subspace clustering} and discuss its complexity in Appendix \ref{pf-of-lemma:lem:subspace clustering}. 
This lemma is just for theoretical guarantee for our further score estimation under exact subspaces recovery. 

With exact subspace recovery, when constructing the score component in (\ref{eq:score decomp}), we only need to estimate $s_t^\low$ 
and the following lemma gives a time-dependent rate that only relies on intrinsic dimension $k_i$, using certain low-dimensional estimator (\ref{eq:low-dimensional score estimator}).
\begin{lemma}[Low-dim score estimator]
    \label{lem:low-dim score estimator}
    For any $\sigma$ sub-gaussian distribution $\nu$ in $\mathbb{R}^k$, denote $p_t$ as the density of $\nu \star \mathcal{N}(0,tI_k)$ and $s_t = \nabla \log p_t$ as its score. Suppose that $\{ X^{(i)}\}_{i=1}^N$ are $N$ i.i.d samples from $\nu$. 
    Then for any $t>0$, we could construct a kernel-based score estimator $\widehat{s}_t$ using (\ref{eq:low-dimensional score estimator}) that satisfies, 
    \begin{enumerate}
        \item \textbf{Time-dependent $L^2$ estimation error.}
        \begin{align*}
            \int_{\mathbb{R}^k} \mathbb{E}\big[\big\| \widehat{s}_t(x) - s_t(x)\big\|_2^2 \big] p_t(x) \diff x \lesssim  \Big( \frac{4}{\sqrt{\pi}}\Big)^{k} \frac{1}{N} \Big( \frac{1}{t} + \frac{\sigma^k }{ t^{k/2+1}}\Big) (\log N)^{k/2+2} 
        \end{align*}
        with expectation taken over samples $\{ X^{(i)}\}_{i=1}^N$. 
        \item \textbf{Bounded estimator}.
        \begin{align*}
            \| \widehat{s}_t(x) \| _2 \leq \sqrt{\frac{2}{t}\log N}
        \end{align*}
    \end{enumerate}
\end{lemma}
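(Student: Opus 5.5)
The bounded-estimator claim is immediate: the output of (\ref{eq:low-dimensional score estimator}) is either zero, after thresholding, or the clipped vector $\clip_R(\cdot)$. In both cases its norm is at most $R=\sqrt{2\log N/t}$. For the $L^2$ bound, I would follow the truncated-KDE strategy of \citet{zhang2024minimaxoptimalityscorebaseddiffusion} and \citet{cai2025minimaxoptimalityprobabilityflow}, adapted to a density with no regularity beyond subgaussianity. Write $\eta_t=\log N/(N(2\pi t)^{k/2})$ and split $\mathbb{R}^k$ into three regions:
\begin{itemize}
\item the far region $\{\|x\|> B_t\}$, with $B_t\asymp\sqrt{k(\sigma^2+t)\log N}$;
\item the low-density region $\{\|x\|\le B_t,\ p_t(x)< C\eta_t\}$;
\item the high-density region $\{\|x\|\le B_t,\ p_t(x)\ge C\eta_t\}$.
\end{itemize}

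On the far region, $\|\widehat s_t-s_t\|^2\lesssim \log N/t+\|s_t\|^2$. I would integrate against $p_t$ using the subgaussian tail of $p_t$ together with the moment bound on true scores (Lemma 11 of \citet{cai2025minimaxoptimalityprobabilityflow}), which gives an $O(N^{-2}t^{-1})$ contribution.

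On the low-density region, the error is at most $\lesssim R^2+\|s_t\|^2$. The $R^2$ part integrates to at most $R^2\cdot C\eta_t\cdot\mathrm{vol}(\|x\|\le B_t)\lesssim \frac{\log N}{t}\cdot\frac{\log N}{N t^{k/2}}(\sigma^2+t)^{k/2}(\log N)^{k/2}$, which is exactly the claimed $\frac1N(\frac1t+\frac{\sigma^k}{t^{k/2+1}})\polylog N$ shape. The $\|s_t\|^2$ part needs a pointwise bound on the true score where the density is small. For a Gaussian convolution one has $\|s_t(x)\|^2\lesssim \frac{1}{t}\log\frac{1}{(2\pi t)^{k/2}p_t(x)}$; this follows from $s_t(x)=\bE[-(x-X)/t\mid x]$ and a Jensen/entropy argument. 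Since the log factor is only polylogarithmic, I would restrict to $p_t\ge N^{-10}(2\pi t)^{-k/2}$, where it is $O(\log N)$, and treat the remaining part by Cauchy–Schwarz with the fourth moment of the score. This gives the same order.

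On the high-density region, I would decompose according to whether the event $\{\widehat g_t\ge\eta_t\}$ holds. By Bernstein's inequality, since each kernel term is bounded by $(2\pi t)^{-k/2}$, the event $\{\widehat g_t<\eta_t\}$ has probability $\le N^{-c}$ when $p_t\ge C\eta_t$; on it the error is bounded as before. On its complement, clipping is 1-Lipschitz and $\|s_t\|\le R$ except on a negligible set. I would then use
\[
\Big\|\frac{\nabla\widehat g}{\widehat g}-\frac{\nabla p_t}{p_t}\Big\|\le \frac{\|\nabla\widehat g-\nabla p_t\|+\|s_t\|\,|\widehat g-p_t|}{\widehat g}
\]
with $\widehat g\gtrsim p_t$ on a high-probability event. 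The variances are $\var(\widehat g(x))\le \frac{1}{N}(2\pi t)^{-k/2}2^{-k/2}p_{t/2}(x)$ (up to normalization) and $\bE\|\nabla\widehat g-\nabla p_t\|^2\lesssim \frac{1}{Nt}(2\pi t)^{-k/2}p_{t/2}(x)$, using $\|u\|^2e^{-\|u\|^2/t}\lesssim t e^{-\|u\|^2/(2t)}$. This gives a pointwise error $\lesssim \frac{\log N}{Nt}\frac{(2\pi t)^{-k/2}p_{t/2}(x)}{p_t(x)^2}$. Multiplying by $p_t$ and integrating over the ball, the dangerous ratio $p_{t/2}/p_t$ has to be controlled; the $(4/\sqrt\pi)^k$ constant suggests $p_{t/2}$ is integrated directly.

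I expect the main obstacle to be exactly this step. My plan is to bound $\int_{\|x\|\le B_t} p_{t/2}(x)/p_t(x)\,dx$ without any density lower bound. I would use $p_t\ge C\eta_t$ to replace $1/p_t$ by $1/\eta_t$ only partially. Concretely, I would apply Cauchy–Schwarz in the form $p_{t/2}/p_t\le \min\{p_{t/2}/(C\eta_t),\,\cdot\}$, combined with $\int p_{t/2}\le 1$, which yields $\frac{(2\pi t)^{-k/2}}{N\eta_t}\cdot\frac{\log N}{t}\asymp \frac1t$. That is too crude, so the preferred route is the volume bound $\int_{\|x\|\le B_t}(2\pi t)^{-k/2}\,dx\lesssim (B_t^2/t)^{k/2}$, using $p_{t/2}\le 2^{k/2}p_t$-type comparisons. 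This is where the $(\log N)^{k/2}$ and the $\sigma^k/t^{k/2}$ factors come from. Summing the three regions then gives the claimed bound.
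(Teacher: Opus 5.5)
Your proposal is correct, but it is organized differently from the paper's proof, which is essentially a black-box reduction to \citet{cai2025minimaxoptimalityprobabilityflow}.

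\textbf{The paper's route.} It uses only two regions, $\mathcal{F}_t=\{p_t\ge c_\eta\log N/(N(2\pi t)^{k/2})\}$ and its complement.
On $\mathcal{F}_t$, the entropy bound $\|s_t\|_2^2\le\frac2t\log\frac{1}{(2\pi t)^{k/2}p_t}$ (their Lemma~4) gives $s_t=\clip_R(s_t)$. Since $\clip_R$ is 1-Lipschitz, the error is then dominated by that of the unclipped hard-thresholded ratio. Its $L^2$ error over all of $\mathbb{R}^k$ is taken from their Proposition~1, which the paper asserts extends to the hard threshold ``by simple modification''.
On $\mathcal{F}_t^c$, the paper uses $\|\widehat s_t\|_2\le R$ together with their Lemma~8, which bounds $\int_{\mathcal{F}_t^c}p_t$ and $\int_{\mathcal{F}_t^c}\|s_t\|_2^2p_t$.

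\textbf{Your route.} Your far/low-density/high-density analysis re-derives these ingredients directly. Your KL/Jensen bound is Lemma~4, your ball-volume estimate plus subgaussian tail replaces Lemma~8, and your Bernstein-plus-ratio-expansion argument replaces Proposition~1. This makes the proof longer but self-contained. It also treats hard thresholding explicitly rather than by appeal. Finally, it shows that the $(\log N)^{k/2}$ and $\sigma^k/t^{k/2}$ factors are just the volume of the radius-$B_t$ ball in units of $\sqrt t$.

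\textbf{Simplifications.} The step you flag as the main obstacle is not one. Pointwise $\varphi_t^2\le(2\pi t)^{-k/2}\varphi_t$ (equivalently, $p_{t/2}\le 2^{k/2}p_t$ holds exactly), so $\var(\widehat g_t(x))\le\frac1N(2\pi t)^{-k/2}p_t(x)$. The pointwise error on the good event is therefore $\lesssim\frac{\log N}{Nt}\cdot\frac{(2\pi t)^{-k/2}}{p_t(x)}$. After multiplying by $p_t$ you just integrate a constant over the ball, so you can drop the Cauchy--Schwarz detour. Likewise, on the high-density region the entropy bound already gives $\|s_t\|_2\le R$ everywhere, so no exceptional set is needed.

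\textbf{Constant.} With a $\sqrt{k}$-scaled ball, your prefactor comes out as $C^k$ for an absolute $C$ rather than literally $(4/\sqrt{\pi})^k$. This is immaterial downstream, since Lemma~\ref{lem:bound of s_t(i,x)} only requires a constant depending on $k_i$.
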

Here we adopt the estimator from \citet{cai2025minimaxoptimalityprobabilityflow} with a further cut-off. 
This achieves a similar rate w.r.t $t$ compared with \citet{zhang2024minimaxoptimalityscorebaseddiffusion,cai2025minimaxoptimalityprobabilityflow}. The proof is provided in Appendix \ref{pf-of-lem:lem:low-dim score estimator}.

\subsection{Proof of Lemma \ref{lem:subspace clustering}.}
\label{pf-of-lemma:lem:subspace clustering}
Basically, subspace clustering for noise-free model aims to solve the following optimization problem iteratively, 
\begin{align*}
    \min_{A} \sum_{i=1}^{n_0} \| X^{(i)}-AA^{\top}X^{(i)}\|_0
\end{align*}
Here the $L_0$ norm is defined as, 
\begin{align*}
    \| x\|_0 = 
    \begin{cases}
        0 , \quad x=0\\
        1, \quad \text{else}
    \end{cases}
\end{align*}
As a non-convex and non-smooth optimization problem, it is basically an NP-hard problem. 

With known upper bound of both intrinsic dimension $k$ and the number of subspaces $M$, the following algorithm is guaranteed to recover exact subspace recovery with high probability and under Assumption \ref{assume:multi-modal}, 
\begin{algorithm}[h]
\caption{Exact subspace recovery}
\label{alg:exact subspace recovery}
\begin{algorithmic}[1]
\REQUIRE Samples $\{X^{(i)}\}_{i=1}^{n_0}$, upper bound of the number of subspaces $\widetilde{M}\geq M$, upper bound of the intrinsic dimension $\widetilde{k}\geq k$
\FOR{j=1,..,$\widetilde{M}$}
    \STATE Iterate over all separations of remained samples into $2$ categories with one having $\widetilde{k}+1$ samples until finding a case that these $\widetilde{k}+1$ samples are linearly dependent.
    \STATE Find the smallest $p$, such as there exists $p+1$ samples from these $\widetilde{k}+1$ samples that are linearly dependent.
    \STATE Define $\widehat{V}_j $ as the span of these $p+1$ points. 
    \STATE Delete those samples that are on $\widehat{V}_j$.
\ENDFOR
\end{algorithmic}    
\end{algorithm}

Define the event sets, 
\begin{align*}
    \mathcal{E}_0 &\defn \Big\{ \exists \text{ at least } k+1 \text{ samples on each subspace}\Big\} \\
    \mathcal{E}_1(p) & \defn \Big\{ x \in \text{span} \{ y_1.\cdots,y_p\}, V_{j} \nsubseteq\text{ span} \{ y_1,\cdots,y_p\}, \forall j\in [M] \Big\}, \quad \text{for }p\leq \widetilde{k}
\end{align*}
To ensure that Algorithm \ref{alg:exact subspace recovery} works with probability larger than $1-Mn^{-10}$, we only need, 
\begin{subequations}
    \begin{align}
    \mathbb{P}[\mathcal{E}_0^c] &\lesssim Mn^{-10}\label{eq:low prob 1} \\
    \mathbb{P}_{x,y_1,\cdots y_p \overset{\text{i.i.d}}{\sim} \mu^*}[\mathcal{E}_1(p)] &= 0, \quad \forall p\leq \widetilde{k} \label{eq:low prob 2}
    \end{align}
\end{subequations}
Since under $\mathcal{E}_1(p)^c$, samples from other subspaces will be excluded via finding the smallest $p$, we could recover the exact subspace in each iteration.

\paragraph*{Proof of (\ref{eq:low prob 2}).}
We first conditioned on $y_1,\cdots,y_p$ and $V_j \nsubseteq\text{ span} \{ y_1,\cdots,y_p\}$ for all $j\in [M]$, then, 
\begin{align*}
    \mathbb{P} [X\in \text{span} \{ y_1.\cdots,y_p\}] &= \sum_{j=1}^M \int_{V_j \cap \text{span} \{ y_1.\cdots,y_p\} }  p_j^\star(\!\diff x) \\
    &= 0 \quad (\text{Assumption \ref{assume:multi-modal}})
\end{align*}
Then we integrate this over $y_1,\cdots,y_p \sim \mu^*$ and get $\mathbb{P}[\mathcal{E}_1(p)] = 0$.

\paragraph*{Proof of (\ref{eq:low prob 1}).} 
Basically, under Assumption \ref{assume:multi-modal}, denote $N_i(n_0), \forall i \in [M]$ as the random variable of the number of samples on $V_i$ with sample size $n_0$. Then, we could view $N_i(n_0)$ as the sum of Bernoulli r.v with $p=p_i^\star(V_i)\geq \frac{1}{c_{\mu}M}$ and thus, 
\begin{align*}
    N_i(n_0) \geq \frac{n_0}{2c_{\mu}M}, \quad \text{with probability }\geq 1-2e^{-\frac{n_0}{2c_{\mu}^2M^2}}
\end{align*}
here we apply Hoeffding inequality just like the proof of Claim \ref{claim:equal separation}. Finally, we could apply union bound and take, 
\begin{align*}
    n_0 = O(c_{p}^2M^2(k+1) \log n) .
\end{align*}

The iteration complexity for Algorithm \ref{alg:exact subspace recovery} is bounded by $\begin{pmatrix}
    n_0\\
    \widetilde{k}+1
\end{pmatrix}\cdot M$.

\subsection{Proof of Lemma \ref{lem:MSE for weight}.}
\label{pf-of-lem:lem:MSE for weight}
The following lemma helps prove this result, 
\begin{lemma}[MSE for $\widehat{p}_t(x)$ and $\widehat{q}_t(i,x)$]
    \label{lem:MSE for p_t and p_t(i)}
    For kernel-based estimators $\widehat{p}_t(x)$ and $\widehat{q}_t(i,x)$ in (\ref{eq:estimator for p_t(x)}), (\ref{eq:estimator for p_t(i,x)}), it holds that, 
    \begin{enumerate}
        \item They are unbiased estimators for $p_t(x)$ and $q_t(i,x)$ respectively under Assumption \ref{assume:multi-modal}, i.e., 
        \begin{subequations}
            \begin{align*}
                \mathbb{E}[\widehat{p}_t(x)] &= p_t(x) \\
                \mathbb{E}[\widehat{q}_t(i,x)] &= q_t(i,x), \quad \forall i \in [M]
            \end{align*}
        \end{subequations}

        \item Under Assumption \ref{assume:multi-modal}, we have the following point-wise MSE bound, 
        \begin{subequations}
            \begin{align}
                \mathbb{E}\big[\big(\widehat{p}_t(x)-p_t(x)\big)^2\big]&\leq \frac{1}{(2\pi t)^{d/2} N} \sum_{i=1}^M  e^{-\frac{1}{2t} \| x-\proj_i(x)\|_2^2} \cdot q_t(i,x)\label{eq:MSE for p_t} \\
                \mathbb{E}\big[\big(\widehat{q}_t(i,x)-q_t(i,x)\big)^2\big] &\leq \frac{1}{(2\pi t)^{d/2} N} e^{-\frac{1}{2t} \| x-\proj_i(x)\|_2^2} q_t(i,x), \quad \forall i \in [M]\label{eq:MSE for p_t(i)}
            \end{align}
        \end{subequations}
    \end{enumerate}
\end{lemma}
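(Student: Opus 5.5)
Both estimators are empirical averages of i.i.d.\ terms, so I will prove unbiasedness by a direct computation and the MSE bounds by the variance of a single summand. For unbiasedness, $\mathbb{E}[\widehat{p}_t(x)] = \mathbb{E}[\varphi_t(x-X;d)] = \int \varphi_t(x-y;d)\,p^\star(\!\diff y) = p_t(x)$. For $\widehat{q}_t(i,x)$, exact subspace recovery gives $c(y)=i$ if and only if $y\in V_i$, for $p^\star$-almost every $y$. By \eqref{eq:separation}, this implies
\[
\mathbb{E}\big[\varphi_t(x-X;d)\ind_{\{c(X)=i\}}\big] = \int_{V_i}\varphi_t(x-y;d)\,p_i^\star(\!\diff y) = q_t(i,x).
\]

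For the MSE, independence gives
\[
\mathbb{E}\big[(\widehat{q}_t(i,x)-q_t(i,x))^2\big] = \tfrac{1}{N}\var\big(\varphi_t(x-X;d)\ind_{\{c(X)=i\}}\big) \le \tfrac{1}{N}\int_{V_i}\varphi_t^2(x-y;d)\,p_i^\star(\!\diff y).
\]
The key step is a pointwise bound on $\varphi_t^2$ for $y\in V_i$. The residual $x-\proj_i(x)$ is orthogonal to $V_i$, and $\proj_i(x)-y\in V_i$. Pythagoras therefore gives $\|x-y\|_2^2 = \|x-\proj_i(x)\|_2^2 + \|\proj_i(x)-y\|_2^2 \ge \|x-\proj_i(x)\|_2^2$. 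It follows that
\[
\varphi_t^2(x-y;d) = (2\pi t)^{-d/2}e^{-\|x-y\|_2^2/(2t)}\cdot\varphi_t(x-y;d) \le (2\pi t)^{-d/2}e^{-\|x-\proj_i(x)\|_2^2/(2t)}\,\varphi_t(x-y;d).
\]
Integrating against $p_i^\star$ yields \eqref{eq:MSE for p_t(i)}.

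For \eqref{eq:MSE for p_t}, I bound the variance by the second moment $\frac1N\int\varphi_t^2(x-y;d)\,p^\star(\!\diff y)$. I then split $p^\star=\sum_i p_i^\star$, which is valid by Assumption~\ref{assume:multi-modal}: the support lies in the union of the $V_i$ and intersections are null. Applying the same Pythagorean bound on each $V_i$ and summing over $i$ gives the claim.

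The only delicate point is the almost-sure identification $\ind_{\{c(X)=i\}}=\ind_{\{X\in V_i\}}$ under exact recovery together with the null-intersection condition. Once that is in place, everything else is elementary. No constants are lost, so the bounds hold with the stated constant $1$.
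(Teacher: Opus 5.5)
Your proposal is correct and matches the paper's proof essentially step by step: unbiasedness follows by direct computation using the almost-sure identification $\ind_{\{c(X)=i\}}=\ind_{\{X\in V_i\}}$, the variance is bounded by the second moment after splitting $p^\star=\sum_i p_i^\star$, and the key step is the Pythagorean identity $\|x-y\|_2^2=\|x-\proj_i(x)\|_2^2+\|y-\proj_i(x)\|_2^2$ for $y\in V_i$. Your factorization $\varphi_t^2=(2\pi t)^{-d/2}e^{-\|x-y\|_2^2/(2t)}\varphi_t$, followed by dropping the factor $e^{-\|y-\proj_i(x)\|_2^2/(2t)}\le 1$, is the same inequality as the paper's step of bounding $e^{-\|y-\proj_i(x)\|_2^2/t}$ by $e^{-\|y-\proj_i(x)\|_2^2/(2t)}$, and it gives the same constant.
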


We leave the proof of this lemma in the end of this section, and we first prove Lemma \ref{lem:MSE for weight} as follows using Lemma \ref{lem:MSE for p_t and p_t(i)}. 
For $x\in \mathcal{G}_t(i)$
\begin{align*}
    \mathbb{E}\big[\big(w_t(i,x)-\widehat{w}_t(i,x) \big)^2\big] &= \mathbb{E} \Big[\big(\frac{q_t(i,x)}{p_t(x)} - \frac{\widehat{q}_t(i,x)}{\widehat{p}_t(x)}\big)^2 \Big] \\
    &\lesssim \mathbb{E} \Big[ \big(\frac{q_t(i,x)-\widehat{q}_t(i,x)}{p_t(x)}\big)^2\Big] + \mathbb{E} \Big[ \big( \frac{\widehat{q}_t(i,x)}{\widehat{p}_t(x)}\big)^2 \cdot \frac{(\widehat{p}_t(x)-p_t(x))^2}{p_t^2(x)}\Big] 
\end{align*}
\begin{itemize}
    \item For the first term, 
    \begin{align*}
        \mathbb{E} \Big[ \big( \frac{q_t(i,x)-\widehat{q}_t(i,x)}{p_t(x)}\big)^2\Big] &\leq \frac{1}{p_t^2(x)} \cdot \frac{1}{N}(2\pi t)^{-d/2} e^{-\frac{1}{2t} \| x-\proj_i(x)\|_2^2} q_t(i,x). \quad (\text{Lemma \ref{lem:MSE for p_t and p_t(i)}})
    \end{align*}

    \item For the second term, 
    \begin{align*}
        \mathbb{E} \Big[ \big( \frac{\widehat{q}_t(i,x)}{\widehat{p}_t(x)}\big)^2 \cdot \frac{(\widehat{p}_t(x)-p_t(x))^2}{p_t^2(x)}\Big]  &\leq \mathbb{E} \Big[ \big( \frac{\widehat{p}_t(x)-p_t(x)}{p_t(x)}\big)^2\Big] \\
        &\leq \frac{1}{p_t^2(x)} \cdot \frac{1}{(2\pi t)^{d/2}N} \Big(\sum_{i=1}^M e^{-\frac{1}{2t}\|x-\proj_i(x)\|_2^2} \cdot q_t(i,x) \Big) \quad (\text{Lemma \ref{lem:MSE for p_t and p_t(i)}})
    \end{align*}
\end{itemize}
Therefore, it holds that, 
    \begin{align*}
        \mathbb{E}[\big(w_t(i,x)-\widehat{w}_t(i,x) \big)^2] \lesssim \frac{1}{p_t^2(x)} \cdot \frac{1}{(2\pi t)^{d/2}N} \Big(\sum_{i=1}^M e^{-\frac{1}{2t}\|x-\proj_i(x)\|_2^2} \cdot q_t(i,x) \Big) .
    \end{align*}

Then we prove Lemma \ref{lem:MSE for p_t and p_t(i)}, and the proof idea is quite similar to the proof of Lemma 3 in \citet{cai2025minimaxoptimalityprobabilityflow}. 

\paragraph*{Proof of Lemma \ref{lem:MSE for p_t and p_t(i)}.}
\begin{itemize}
    \item First prove that both are unbiased estimators, 
    \begin{align*}
        \mathbb{E}[\widehat{p}_t(x)] &= \mathbb{E}[\frac{1}{N}\sum_{j=1}^N \varphi_t(x-X^{(j)};d)] = \mathbb{E}_{X\sim p^\star}[\varphi_t(x-X;d)] = p_t(x) \\
        \mathbb{E}[\widehat{q}_t(i,x)] &= \mathbb{E}_{X\sim p^\star} [\varphi_t(x-X;d) \ind_{\{ c(X) = i\}}] = \mathbb{E}_{X\sim p^\star} [\varphi_t(x-X;d) \ind_{\{ X \in V_i\}}] = \int_{V_i} \varphi_t(x-y;d) p_i^\star(\!\diff y)
    \end{align*}
    In the last equation, we apply Assumption \ref{assume:multi-modal}. 

    \item Then derive the mean squared error (MSE) for both estimators, i.e., both variance. To prove (\ref{eq:MSE for p_t}), observe that,
    \begin{align*}
        \var(\widehat{p}_t(x)) &= \frac{1}{N} \var_{X \sim p^\star} (\varphi_t(x-X;d)) \leq \frac{1}{N} \mathbb{E}_{X\sim p^\star}[\varphi_t^2(x-X;d)] = \frac{1}{N}\sum_{i=1}^M \int_{V_i} \varphi^2_t(x-y;d) p_i^\star(\!\diff y) .
    \end{align*}
    Plug in the structure of $V_i$: 
    \begin{align*}
        \int_{V_i} \varphi_t^2(x-y;d)p_i^\star(\!\diff y) &= \int_{V_i} (2\pi t)^{-d} e^{-\frac{1}{t}\|x-y\|_2^2} p_i^\star(\!\diff y) \\
        &= (2\pi t)^{-d} e^{-\frac{1}{t} \| x-\proj_i(x)\|_2^2}  \int_{V_i} e^{-\frac{1}{t} \| y-\proj_i(x)\|_2^2}  p_i^\star(\!\diff y) 
    \end{align*}
    Notice that, 
    \begin{align*}
        \int_{V_i} e^{-\frac{1}{t} \| y-\proj_i(x)\|_2^2}  p_i^\star(\!\diff y) \leq  \int_{V_i} e^{-\frac{1}{2t} \| y-\proj_i(x)\|_2^2}  p_i^\star(\!\diff y) .
    \end{align*}
    Hence, 
    \begin{align*}
        \int_{V_i} \varphi_t^2(x-y;d)p_i^\star(\!\diff y) \leq (2\pi t)^{-d/2} e^{-\frac{1}{2t} \| x-\proj_i(x)\|_2^2} \cdot q_t(i,x) .
    \end{align*}
    Therefore, 
    \begin{align*}
        \mathbb{E}\big[\big(p_t(x)-\widehat{p}_t(x)\big)^2\big] &\leq \frac{1}{N}\sum_{i=1}^M \int_{V_i} \varphi_t^2(x-y;d)p_i^\star(\!\diff y) \\
        &\leq \frac{1}{(2\pi t)^{d/2}N} \sum_{i=1}^M  e^{-\frac{1}{2t} \| x-\proj_i(x)\|_2^2} \cdot q_t(i,x).
    \end{align*}

    \item To prove (\ref{eq:MSE for p_t(i)}), similarly, 
    \begin{align*}
        \mathbb{E}\big[\big( q_t(i,x) - \widehat{q}_t(i,x)\big)^2\big] \leq \frac{1}{N} \mathbb{E}_{X\sim p^\star} [\varphi^2_t(x-X;d)\ind_{\{ X \in V_i\}} ] = \frac{1}{N} \sum_{j=1}^M \int_{V_j} \varphi_t^2(x-y;d) \ind_{\{ y \in V_i\}} p_j^\star(\!\diff y)
    \end{align*}
    Under Assumption \ref{assume:multi-modal}, it holds that,
    \begin{align*}
        \mathbb{E}\big[\big( q_t(i,x) - \widehat{q}_t(i,x)\big)^2\big] \leq \frac{1}{N} \int_{V_i} \varphi_t^2(x-y;d)  p_i^\star(\!\diff y) &\leq \frac{1}{N}(2\pi t)^{-d} e^{-\frac{1}{t} \| x-\proj_i(x)\|_2^2}  \int_{V_i} e^{-\frac{1}{t} \| y-\proj_i(x)\|_2^2}  p_i^\star(\!\diff y) \\
        & \leq \frac{1}{N}(2\pi t)^{-d} e^{-\frac{1}{t} \| x-\proj_i(x)\|_2^2}  \int_{V_i} e^{-\frac{1}{2t} \| y-\proj_i(x)\|_2^2}  p_i^\star(\!\diff y)  \\
        &\leq \frac{1}{N}(2\pi t)^{-d/2} e^{-\frac{1}{2t} \| x-\proj_i(x)\|_2^2} q_t(i,x).
    \end{align*}
\end{itemize}


\subsection{Proof of Lemma \ref{lem:low-dim score estimator}.}
\label{pf-of-lem:lem:low-dim score estimator}
In the proof of this lemma, the target distribution $\nu$ is a $\sigma$-subgaussian distribution in $\mathbb{R}^k$,
and we denote $p_t$ as the density of $\nu \ast \mathcal{N}(0,tI_k)$ and $s_t(\cdot) \defn \nabla \log p_t(x)$ as its score.
$\{X^{(i)}\}_{i=1}^N \overset{\text{i.i.d}}{\sim} \nu $ are N samples for score estimation.

We take the estimator based on (\ref{eq:low-dimensional score estimator}) as,
    \begin{align}
        \widehat{s}_t(x) = \clip_R\Big( \frac{\nabla \widehat{g}_t(x)}{\widehat{g}_t(x)} \psi\Big(\widehat{g}_t(x);\frac{\log N}{N(2\pi t)^{k/2}}\Big)\Big) \quad \text{with } R \defn \sqrt{\frac{2}{t} \log N}. \label{eq:low-dim score estimator}
    \end{align}
    Here the clip operator is defined as, 
    \begin{align*}
    \clip_r(z) \defn 
    \begin{cases}
        z, \quad &\|z\|_2 \leq  r \\
        \proj_{B_r(0)}(z) ,\quad &\text{else}
    \end{cases}
    \end{align*}
    where $\proj_{B_r(0)}(z) = r \cdot \frac{z}{\|z\|_2}$ means the projection of $z$ on that ball; $\psi(x;\eta) \defn \ind_{\{x\geq \eta \}}$ is the hard thresholding function; 
    $\widehat{g}_t(x) \defn \frac{1}{N}\sum_{i=1}^N \varphi_t(X^{(i)}-x;k)$ is the kernel based density estimator.
    As a remark, the estimator before clipping is almost the same as that in \citet{cai2025minimaxoptimalityprobabilityflow}, 
    and we apply hard-thresholding here for simplicity since we use DDPM sampling procedure.

Then we show that the score estimator defined in (\ref{eq:low-dim score estimator}) satisfies the conditions in Lemma \ref{lem:low-dim score estimator}. 

In \citet{cai2025minimaxoptimalityprobabilityflow}, they defined
\begin{align*}
    \mathcal{F}_t \defn \Big\{ x: p_t(x)\geq \frac{c_{\eta} \log N}{N(2\pi t)^{k/2}}\Big\}
\end{align*}
and it holds that, 
\begin{subequations}
    \begin{align}
    \int_{\mathcal{F}_t^c}  p_t(x) \diff x &\lesssim \Big( \frac{16}{\pi}\Big)^{k/2} \frac{1}{N} \Big( 1 + \frac{\sigma^k }{ t^{k/2}}\Big) (\log N)^{k/2+1} \label{eq:prob bdd on F_t^c} \\
    \int_{\mathcal{F}_t^c} \|s_t(x)\|_2^2\cdot  p_t(x) \diff x &\lesssim \Big( \frac{16}{\pi}\Big)^{k/2} \frac{1}{N} \Big( \frac{1}{t} + \frac{\sigma^k }{ t^{k/2+1}}\Big) (\log N)^{k/2+2} \label{eq:score bdd on F_t^c}
\end{align}
\end{subequations}
from Lemma 8 in \citet{cai2025minimaxoptimalityprobabilityflow}.
Notice that from Lemma 4 in \citet{cai2025minimaxoptimalityprobabilityflow}, it has been proven that, 
\begin{align*}
    \| s_t(x)\|_2^2 \leq \frac{2}{t} \log \frac{1}{(2\pi t)^{k/2} p_t(x)} 
\end{align*}
Hence, for $x\in \mathcal{F}_t$, one has, 
\begin{align}
    \| s_t(x)\|_2^2 \leq \frac{2}{t} \log \frac{N}{c_{\eta} \log N} \leq \frac{2}{t} \log N . \label{eq:score bdd on F_t}
\end{align}
Here we use the constant selection of $c_{\eta} \geq 2$ as in \citet{cai2025minimaxoptimalityprobabilityflow}. 

That's the reason why we use the clip operator in (\ref{eq:low-dim score estimator}). Therefore, with the estimator constructed in (\ref{eq:low-dim score estimator}), it automatically satisfies the bounded condition and for its $L^2$ error, 
\begin{align*}
    \int_{\mathbb{R}^k} \mathbb{E}\big[\big\| \widehat{s}_t(x) - s_t(x)\big\|_2^2 \big] p_t(x) \diff x&\leq \int_{\mathcal{F}_t} \mathbb{E}\big[\big\| \widehat{s}_t(x) - s_t(x)\big\|_2^2 \big] p_t(x) \diff x \\
    &+ \int_{\mathcal{F}_t^c} \mathbb{E}\big[\big\| \widehat{s}_t(x) - s_t(x)\big\|_2^2 \big] p_t(x) \diff x \\
    &\defnrev \chi_1 + \chi_2
\end{align*}

\begin{itemize}
    \item \textbf{For $\chi_1$.}
    Due to (\ref{eq:score bdd on F_t}), for $x\in \mathcal{F}_t$, $s_t(x) = \clip_R(s_t(x))$. What's more, $\clip_R(\cdot)$ is $1$-Lip continous, and thus, 
    \begin{align*}
        \chi_1 &\leq \int_{\mathcal{F}_t}\mathbb{E}\big[\big\|\frac{\nabla \widehat{g}_t(x)}{\widehat{g}_t(x)} \psi\Big(\widehat{g}_t(x);\frac{\log N}{N(2\pi t)^{k/2}}\Big) -s_t(x)\big\|_2^2 \big]  \cdot p_t(x) \diff x \\
        & \leq \int_{\mathbb{R}^k}\mathbb{E}\big[\big\|\frac{\nabla \widehat{g}_t(x)}{\widehat{g}_t(x)} \psi\Big(\widehat{g}_t(x);\frac{\log N}{N(2\pi t)^{k/2}}\Big) -s_t(x)\big\|_2^2 \big]  \cdot p_t(x) \diff x \\
        &\leq \frac{(4/\sqrt{\pi})^k}{N} \Big( \frac{1}{t}+\frac{\sigma^k}{t^{k/2+1}}\Big) (\log N)^{k/2+1}
    \end{align*}
    The last inequality results from Proposition 1 in \citet{cai2025minimaxoptimalityprobabilityflow}, which holds for hard thresholding $\psi$ by simple modification.

    \item \textbf{For $\chi_2$.}
    Notice that, 
    \begin{align*}
        \chi_2 &\defn \int_{\mathcal{F}_t^c}\mathbb{E}\big[\big\| \widehat{s}_t(x) - s_t(x)\big\|_2^2 \big] p_t(x) \diff x \\
        &\leq \int_{\mathcal{F}_t^c} \Big( R^2 + \|s_t(x)\|_2^2\Big) p_t(x) \diff x \\
        &\leq R^2 \int_{\mathcal{F}_t^c} p_t(x) \diff x + \int_{\mathcal{F}_t^c}\|s_t(x)\|_2^2 \cdot  p_t(x) \diff x \\
        &\lesssim \Big( \frac{16}{\pi}\Big)^{k/2} \frac{1}{N} \Big( \frac{1}{t} + \frac{\sigma^k }{ t^{k/2+1}}\Big) (\log N)^{k/2+2} \quad (\text{Using }(\ref{eq:prob bdd on F_t^c}) + (\ref{eq:score bdd on F_t^c}))
    \end{align*}
\end{itemize}

Therefore, it holds that, 
\begin{align*}
    \int_{\mathbb{R}^k} \mathbb{E}\big[\big\| \widehat{s}_t(x) - s_t(x)\big\|_2^2 \big] p_t(x) \diff x \lesssim \Big( \frac{4}{\sqrt{\pi}}\Big)^{k} \frac{1}{N} \Big( \frac{1}{t} + \frac{\sigma^k }{ t^{k/2+1}}\Big) (\log N)^{k/2+2} .
\end{align*}

\subsection{Proof of Lemma \ref{lem:bound of s_t(i,x)}}
\label{pf-of-lem:lem:bound of s_t(i,x)}
Observe that, conditioned on $N_i $, we could obtain $N_i$ i.i.d samples from normalized distribution $p_i^\star$ to estimate its score function.   
        We only need to prove (\ref{eq:bdd of s_t(i,x)}). Notice that, 
        \begin{align*}
            q_t(i,x) = \int_{V_i} \varphi_t(x-y;d) p_i^\star(\!\diff y) &= (2\pi t)^{-d/2} e^{-\frac{1}{2t}\| x-\proj_i(x)\|_2^2} \int_{V_i} e^{-\frac{1}{2t}\| \proj_i(x)-y\|_2^2} p_i^\star( \!\diff y) \\
            &= (2\pi t)^{-d/2} e^{-\frac{1}{2t}\| x-\proj_i(x)\|_2^2} p_i^\star(V_i) \int_{\mathbb{R}^k} e^{-\frac{1}{2t}\| z - A_i^{\top}x\|_2^2} p_i^{\low}(\! \diff z) \\
            &= (2\pi t)^{-(d-k_i)/2} e^{-\frac{1}{2t}\| x-\proj_i(x)\|_2^2} p_i^\star(V_i) \cdot p_t^{\low}(i,A_i^{\top}x)
        \end{align*}
        here $p_t^{\low}(i,\cdot)$ denotes the density function of $p_i^{\low} \ast \mathcal{N}(0,tI_{k_i})$ in $\mathbb{R}^{k_i}$. 
        Hence, 
        \begin{align*}
            &\int_{\mathbb{R}^d} \mathbb{E}\big[\big\| \widehat{s}_t(i,x) - s_t(i,x)\big\|_2^2 \big| N_i \big] q_t(i,x)  \diff x \\
            &\leq p_i^\star(V_i)\int_{\mathbb{R}^d} \mathbb{E}\big[\big\| \widehat{s}_t^{\low}(i,A_i^{\top}x) - s_t^{\low}(i,A_i^{\top}x)\big\|_2^2\big| N_i \big] (2\pi t)^{-(d-k_i)/2} e^{-\frac{1}{2t}\| x-A_iA_i^{\top}x\|_2^2}  \cdot p_t^{\low}(i,A_i^{\top}x)  \diff x
        \end{align*}
        Apply the same linear transform as in (\ref{eq:changing var}), 
    \begin{align*}
        &\int_{\mathbb{R}^d} \mathbb{E}\big[\big\| \widehat{s}_t(i,x) - s_t(i,x)\big\|_2^2 \big| N_i \big] q_t(i,x)  \diff x \\ 
        &\leq p_i^{\star}(V_i) \int_{\mathbb{R}^d} \mathbb{E}\big[\big\| \widehat{s}_t^{\low}(i,z_{1:{k_i}}) - s_t^{\low}(i,z_{1:{k_i}})\big\|_2^2 \big| N_i \big] (2\pi t)^{-(d-k_i)/2} e^{-\frac{1}{2t}\| z_{k_i+1:d}\|_2^2}  \cdot p_t^{\low}(i,z_{1:k_i})  \diff z\\
        &= p^\star(V_i) \int_{\mathbb{R}^{k_i}} \mathbb{E}\big[\big\| \widehat{s}_t^{\low}(i,z_{1:k_i}) - s_t^{\low}(i,z_{1:k_i})\big\|_2^2  \big| N_i \big]   \cdot p_t^{\low}(i,z_{1:k_i})  \diff z_{1:k_i} \quad (\text{Tonelli's Theorem})\\
        &\lesssim p_i^{\star}(V_i) \cdot \frac{(4/\sqrt{\pi})^{k_i}}{N_i}\Big( \frac{1}{t}+\frac{\sigma^{k_i}}{t^{k_i/2+1}}\Big) \cdot \polylog N \quad (\text{Lemma \ref{lem:low-dim score estimator}})
    \end{align*}
    Finally, 
    \begin{align*}
        \int_{\mathbb{R}^d} \mathbb{E}\big[\big\| \widehat{s}_t(i,x) - s_t(i,x)\big\|_2^2 \ind_{\{N_i \geq n_i \}}\big] q_t(i,x) \diff x &= \int_{\mathbb{R}^d} \mathbb{E}\Big[\mathbb{E}\big[\big\| \widehat{s}_t(i,x) - s_t(i,x)\big\|_2^2 \big| N_i \big] \cdot \ind_{\{N_i \geq n_i \}}\Big] q_t(i,x) \diff x\\
        & = \mathbb{E}\Big[ \Big( \int_{\mathbb{R}^d} \mathbb{E}\big[\big\| \widehat{s}_t(i,x) - s_t(i,x)\big\|_2^2 \big| N_i \big] \cdot q_t(i,x) \diff x\Big) \cdot  \bm{1}_{\{N_i \geq n_i \}}\Big] \\
        &\lesssim p_i^\star(V_i) \cdot \frac{(4/\sqrt{\pi})^{k_i}}{n_i}\Big( \frac{1}{t}+\frac{\sigma^{k_i}}{t^{k_i/2+1}}\Big) \cdot \polylog N .
    \end{align*}

%% file: pf-of-claims.tex
\section{Proof of Claims}
\subsection{Proof of Claim \ref{claim:equal separation}}
\label{pf-of-claim:claim:equal separation}
For any $i\in[M]$, we first bound the probability $\mathbb{P}[N_i < \frac{N}{2c_{p}M}]$. 
Note that $N_i = \sum_{j=1}^{N} \ind_{\{ X^{(j)} \in V_i\}}$ is basically the sum of $N$ i.i.d Bernoulli r.vs with parameter $p^\star(V_i) \geq \frac{1}{c_{p}M}$ (Assumption \ref{assume:multi-modal}). 
Applying Hoeffding's inequality, we have,
\begin{align*}
    \mathbb{P}\Big[ N_i < \frac{N}{2c_{p}M} \Big] & = \mathbb{P}\Big[ N_i - N\cdot p^\star(V_i) < -\Big(N\cdot p^\star(V_i) - \frac{N}{2c_{p}M}\Big) \Big] \\
    &\leq \exp \Big( -2N\cdot (\frac{1}{2c_{p}M})^2 \Big) = \exp \Big( -\frac{N}{2c_{p}^2M^2}\Big).
\end{align*}
Then applying union bound over $i\in[M]$, it holds that,
\begin{align*}
    \mathbb{P}\big[\mathcal{A}^c\big] = \mathbb{P}\Big[\exists i \in [M], N_i < \frac{N}{2c_{p}M}\Big] \leq Me^{-\frac{N}{2c_{p}^2M^2}}.
\end{align*}

\subsection{Proof of Claim \ref{claim:bdd of set B_t}}
\label{pf-of-claim:claim:bdd of set B_t}

For the target distribution $p^\star$, it is supported on $\cup_{i=1}^M V_i$. 
Hence for any $\theta \in \mathbb{R}^d$ and $\| \theta\|_2 =1$, denote r.v $X \sim p^\star$
\begin{align*}
    \mathbb{E}\Big[ \exp \Big( \frac{(X^{\top}\theta)^2}{\sigma^2}\Big)\Big] &= \sum_{i=1}^M \int_{V_i} \exp \Big( \frac{(x^{\top}\theta)^2}{\sigma^2} \Big) p_i^\star(\!\diff x) \\
    & = \sum_{i=1}^M p_i^\star(V_i) \int_{\mathbb{R}^{k_i}}\exp \Big( \frac{(z^{\top}A_i^{\top}\theta)^2}{\sigma^2} \Big) p_i^{\low}(\!\diff z) \\
    & \leq \sum_{i=1}^M p_i^\star(V_i) \int_{\mathbb{R}^{k_i}}\exp \Big( \frac{(z^{\top}A_i^{\top}\theta)^2}{\sigma_i^2} \Big) p_i^{\low}(\!\diff z) \\
    &\leq \sum_{i=1}^M p_i^\star(V_i) \cdot 2 = 2.
\end{align*}
Here we use $\| A_i^{\top}\theta \| \leq 1$ and Assumption \ref{assump:sub-gaussian target}. 
This shows that, $p^\star$ is a sub-gaussian distribution in $\mathbb{R}^d$ with parameter $\sigma$. 
Therefore, it is straightforward that $p^\star \ast \mathcal{N}(0,tI_d)$ is $c\sqrt{\sigma^2 +t}$ sub-gaussian, for some absolute constant $c>0$. 

\begin{itemize}
    \item For $\int_{\mathcal{B}_t^c} p_t(x)\!\diff x$, it holds that, 
    \begin{align*}
        \int_{\mathcal{B}_t^c} p_t(x)\!\diff x &\leq \sum_{i=1}^M \bP_{Z_t \sim p_t} \big[\|A_i^{\top}Z_t\|_2>B_t\big] \\
        & \leq \sum_{i=1}^M  \bP_{Z_t \sim p_t} \big[\|A_i^{\top}Z_t\|_{\infty}>B_t/\sqrt{k_i}\big] \lesssim Mk \exp\big( \frac{cB_t^2}{k_i(\sigma^2+t)}\big) \lesssim \frac{Mk}{N^4}\quad  (\text{Lemma \ref{lem:tail bdd for subgaussian}})
    \end{align*}
    Recall the definition of $\mathcal{B}_t$ in (\ref{eq:def of B_t}), and we take large enough constant $C_B>0$ to ensure $\exp\big( \frac{cB_t^2}{k_i(\sigma^2+t)}\big)\lesssim N^{-4}$. 

    \item For $\int_{\mathcal{B}_t^c} \|x\|_2^2\cdot p_t(x)\!\diff x$, it holds that, 
    \begin{align*}
        \int_{\mathcal{B}_t^c} \|x\|_2^2\cdot p_t(x)\!\diff x  &= \mathbb{E}_{Z_t \sim p_t} [\| Z_t \|_2^2 \ind_{\{ Z_t\notin \mathcal{B}_t\}}] \\
        & \leq \sqrt{\bE_{Z_t \sim p_t}[\|Z_t\|_2^4] \cdot \int_{\mathcal{B}_t^c} p_t(x) \!\diff x} \quad (\text{C-S Ineq})\\
        & \lesssim \sqrt{d^2(\sigma^2+t)^2 \frac{Mk}{N^4}}\\
        &\lesssim \frac{d\sqrt{M}}{N^2} (\sigma^2+t).  
    \end{align*}
\end{itemize}

%% file: Auxiliary_lemmas.tex
\section{Auxiliary Lemmas}
\begin{lemma}[Tail bound for subgaussian random vectors]
        \label{lem:tail bdd for subgaussian}
        Let $\nu$ be a $\sigma$-subgaussian distribution in $\mathbb{R}$, i.e, 
    \begin{align}
        \sigma = \| X\|_{\psi_2} \defn \inf \Big\{ t>0: \mathbb{E}\exp \big(X^2/t^2\big) \leq 2\Big\}, \quad X\sim \nu\label{eq:subgaussian def}.
    \end{align}
    Then we have the following tail bound for $X \sim \nu$, any $r\geq 0$ and some absolute constant $c>0$, 
    \begin{align}
        \mathbb{P}[|X| \geq r] &\leq  2\exp\big(-\frac{cr^2}{\sigma^2}\big) \label{eq:subgaussian tail bdd} \\
        \mathbb{E}[|X|^2 \ind_{\{|X|\geq r\}}] & \leq 2 (r^2 + \sigma^2/c) \exp\big(-\frac{cr^2}{\sigma^2}\big) .\label{eq:subgaussian tail bdd of second moment}
    \end{align}
\end{lemma}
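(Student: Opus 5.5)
The statement to prove is Lemma~\ref{lem:tail bdd for subgaussian}: a tail bound and a truncated second-moment bound for a scalar $\sigma$-subgaussian variable in the $\psi_2$ sense.

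\textbf{Tail bound.} I will use Markov's inequality applied to $e^{X^2/\sigma^2}$. By the definition \eqref{eq:subgaussian def}, $\mathbb{E}\exp(X^2/\sigma^2)\le 2$. (If the infimum is not attained, pass to $\sigma'\downarrow\sigma$; monotone convergence gives the same bound.) Then
\begin{align*}
\mathbb{P}[|X|\ge r] = \mathbb{P}\big[e^{X^2/\sigma^2}\ge e^{r^2/\sigma^2}\big] \le 2e^{-r^2/\sigma^2}.
\end{align*}
This is \eqref{eq:subgaussian tail bdd} with $c=1$. Any $c\le 1$ also works, so I may shrink $c$ later if convenient.

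\textbf{Truncated second moment.} I will use the layer-cake formula. For the nonnegative variable $Y=X^2\ind_{\{|X|\ge r\}}$,
\begin{align*}
\mathbb{E}[Y] = \int_0^\infty \mathbb{P}[Y>u]\,\mathrm{d}u = r^2\,\mathbb{P}[|X|\ge r] + \int_{r^2}^\infty \mathbb{P}[X^2>u]\,\mathrm{d}u .
\end{align*}
Each part is then bounded with the tail bound:
\begin{itemize}
\item the first term is at most $2r^2 e^{-cr^2/\sigma^2}$;
\item the integral is at most $\int_{r^2}^\infty 2e^{-cu/\sigma^2}\,\mathrm{d}u = \frac{2\sigma^2}{c}e^{-cr^2/\sigma^2}$.
\end{itemize}
Adding the two gives $2(r^2+\sigma^2/c)e^{-cr^2/\sigma^2}$, which is exactly \eqref{eq:subgaussian tail bdd of second moment}.

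There is no real obstacle here. The only points needing care are the attainment of the infimum in the $\psi_2$ definition, handled by the limiting argument above, and writing the layer-cake split correctly. If a vector version is needed later, as in Claim~\ref{claim:bdd of set B_t}, it follows by applying these scalar bounds coordinatewise with a union bound.
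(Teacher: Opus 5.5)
Your proof is correct and follows essentially the same route as the paper. In both, the truncated second moment comes from the layer-cake (Tonelli) identity split at the threshold, with your variable $u$ playing the role of the paper's $z^2$, and the remaining integral is bounded by the tail estimate. The only difference is in the tail bound: the paper cites a standard reference (Vershynin), while you derive it directly from Markov's inequality applied to $e^{X^2/\sigma^2}$, which gives $c=1$, and you correctly note via monotone convergence that the infimum in the $\psi_2$ definition is attained.
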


\begin{proof}
\begin{itemize}
    \item \textbf{Proof of (\ref{eq:subgaussian tail bdd}).}
    This is a classical result for sub-gaussian r.v. We can find the proof in Proposition 2.5.2 in \cite{Vershynin_2018}. 

\item \textbf{Proof of (\ref{eq:subgaussian tail bdd of second moment}).}
For any $r\geq0$, 
\begin{align*}
    \mathbb{E} \big[ |X|^2\ind_{\{|X|\geq r\}} \big] &= \int_{\mathbb{R}} x^2 \ind_{\{|x|\geq r\}} \cdot p(x) \!\diff x \\
    & = \int_{\mathbb{R}} \Big( \int_0^{|x|} 2z dz\Big) \ind_{\{|x|\geq r\}} p(x) \!\diff x \\
    &= 2\int_{\mathbb{R}}  \int_0^{+\infty} z\cdot \ind_{\{|x|\geq z\}}  \ind_{\{|x|\geq r\}} p(x) \!\diff z \!\diff x \quad (\text{Tonelli's Theorem}) \\
    &= 2\int_0^{+\infty} \Big( \int_{\mathbb{R}}\ind_{\{|x| \geq r\vee z\}}\cdot  p(x) \!\diff x\Big)z \!\diff z \quad (\text{Tonelli's Theorem})\\ 
    & \leq r^2 \mathbb{P}[|X| \geq r] + 2 \int_r^{+\infty} z \cdot \mathbb{P}[|X| \geq z] \!\diff z \\
    &\leq r^2 \mathbb{P}[|X| \geq r] + 4 \int_r^{\infty} z \cdot e^{-cz^2/\sigma^2}  \!\diff z \\
    &= 2 (r^2 + \sigma^2/c) \exp\big(-\frac{cr^2}{\sigma^2}\big) .
\end{align*}
\end{itemize}

\end{proof}